\documentclass{article}
\usepackage[preprint]{neurips_2025}
\usepackage[utf8]{inputenc}
\usepackage[T1]{fontenc}
\usepackage{hyperref}
\usepackage{url}
\usepackage{booktabs}
\usepackage{amsfonts}
\usepackage{nicefrac}
\usepackage{microtype}
\usepackage{xcolor}
\usepackage{amsmath}
\usepackage{amssymb}
\usepackage{amsthm}
\usepackage{algorithm}
\usepackage{algorithmic}
\usepackage{graphicx}
\usepackage{subcaption}
\usepackage{multirow}
\usepackage{enumitem}

\usepackage[most]{tcolorbox}
\definecolor{findingbg}{RGB}{255, 248, 240}
\definecolor{findingframe}{RGB}{230, 126, 34}
\definecolor{takeawaybg}{RGB}{232, 245, 233}
\definecolor{takeawayframe}{RGB}{46, 125, 50}
\definecolor{insightbg}{RGB}{232, 240, 254}
\definecolor{insightframe}{RGB}{30, 136, 229}

\newtcolorbox{findingbox}[1][]{%
  enhanced,
  colback=findingbg,
  colframe=findingframe,
  boxrule=1pt,
  arc=3pt,
  left=8pt, right=8pt, top=6pt, bottom=6pt,
  fonttitle=\bfseries,
  title={#1},
  coltitle=findingframe,
  attach boxed title to top left={xshift=10pt, yshift=-\tcboxedtitleheight/2},
  boxed title style={colback=white, colframe=findingframe, boxrule=0.5pt, arc=2pt}
}

\newtcolorbox{takeawaybox}[1][]{%
  enhanced,
  colback=takeawaybg,
  colframe=takeawayframe,
  boxrule=1.5pt,
  arc=4pt,
  left=10pt, right=10pt, top=8pt, bottom=8pt,
  fonttitle=\bfseries\large,
  title={#1},
  coltitle=takeawayframe,
  attach boxed title to top left={xshift=10pt, yshift=-\tcboxedtitleheight/2},
  boxed title style={colback=white, colframe=takeawayframe, boxrule=0.5pt, arc=2pt}
}

\newtcolorbox{insightbox}[1][]{%
  enhanced,
  colback=insightbg,
  colframe=insightframe,
  boxrule=1pt,
  arc=3pt,
  left=8pt, right=8pt, top=6pt, bottom=6pt,
  fonttitle=\bfseries,
  title={#1},
  coltitle=insightframe,
  attach boxed title to top left={xshift=10pt, yshift=-\tcboxedtitleheight/2},
  boxed title style={colback=white, colframe=insightframe, boxrule=0.5pt, arc=2pt}
}

\newtheorem{theorem}{Theorem}
\newtheorem{proposition}{Proposition}

\newtheorem{definition}{Definition}
\newtheorem{remark}{Remark}
\newtheorem{assumption}{Assumption}

\newcommand{\E}{\mathbb{E}}

\newcommand{\KL}{\mathbb{D}_{\mathrm{KL}}}
\newcommand{\piref}{\pi_{\mathrm{ref}}}
\newcommand{\pith}{\pi_\theta}
\newcommand{\piold}{\pi_{\mathrm{old}}}
\newcommand{\Ym}{\mathcal{Y}^{-}}
\newcommand{\Yp}{\mathcal{Y}^{+}}
\newcommand{\cD}{\mathcal{D}}
\newcommand{\Softplus}{\mathrm{Softplus}}
\newcommand{\tpm}[1]{{\scriptsize\,$\pm$\,#1}}

\title{Overconfident Errors Need Stronger Correction:\\Asymmetric Confidence Penalties for Reinforcement Learning}

\author{%
Yuanda Xu\textsuperscript{*\dag} \quad
Hejian Sang\textsuperscript{*} \quad
Zhengze Zhou\textsuperscript{*} \quad
Ran He\textsuperscript{*} \quad
Zhipeng Wang\textsuperscript{\ddag} \\
LinkedIn Corporation, CA, USA \\[6pt]
\small \textsuperscript{*}Co-first authors with equal contribution. \quad
\textsuperscript{\dag}\texttt{ericxu@linkedin.com} \quad
\textsuperscript{\ddag}\texttt{zhipwang@linkedin.com}
}

\begin{document}

\maketitle

\begin{abstract}
Reinforcement Learning with Verifiable Rewards (RLVR) has become the leading paradigm for enhancing reasoning in Large Language Models (LLMs). However, standard RLVR algorithms suffer from a well-documented pathology: while improving Pass@1 through sharpened sampling, they simultaneously narrow the model's reasoning boundary and reduce generation diversity. We identify a root cause that existing methods overlook: the \emph{uniform penalization of errors}. Current approaches---whether data-filtering methods that select prompts by difficulty, or advantage normalization schemes---treat all incorrect rollouts within a group identically. We show that this uniformity allows \emph{overconfident errors}---incorrect reasoning paths that the RL process has spuriously reinforced---to persist and monopolize probability mass, suppressing valid exploratory trajectories.

We propose the \textbf{Asymmetric Confidence-aware Error Penalty (ACE)}, which introduces a per-rollout confidence shift metric $c_i = \log(\pith(y_i|x) / \piref(y_i|x))$ to dynamically modulate negative advantages. Theoretically, we show that ACE's gradient can be decomposed into the gradient of a \emph{selective regularizer} restricted to overconfident errors, plus a well-characterized residual that partially moderates the regularizer's strength (Theorem~\ref{thm:selective_kl}). Experiments fine-tune Qwen2.5-Math-7B \citep{yang2024qwen25}, Qwen3-8B-Base \citep{yang2025qwen3}, and Llama-3.1-8B-Instruct \citep{dubey2024llama3} on the DAPO-Math-17K dataset \citep{yu2025dapo} using GRPO and DAPO with VERL \citep{sheng2024verl}, evaluating on MATH-500 \citep{hendrycks2021math} and AIME 2025. ACE composes with both GRPO and DAPO, consistently improving the full Pass@$k$ spectrum across all three model families and benchmarks.
\end{abstract}

\section{Introduction}
\label{sec:intro}

Reinforcement Learning with Verifiable Rewards (RLVR) \citep{guo2025deepseek, jaech2024openai} has emerged as a primary method for post-training Large Language Models (LLMs) on reasoning tasks. By using binary correctness signals from deterministic verifiers, algorithms such as PPO \citep{schulman2017ppo}, GRPO \citep{shao2024grpo}, and REINFORCE \citep{williams1992reinforce} iteratively refine the model's Chain-of-Thought (CoT) generation \citep{wei2022cot}.

Despite its successes, a growing body of evidence reveals a fundamental tension in RLVR training. While RLVR models excel at Pass@1, they consistently underperform their own base models at Pass@$k$ for large $k$ \citep{chen2021codex, yue2025limit, brown2024passrate}, indicating a \emph{narrowing} of the reasoning boundary rather than an expansion. This phenomenon has been attributed to diversity collapse: the training process concentrates probability mass on a small number of successful reasoning paths, suppressing the broader solution space.

A prominent strategy addresses this: Difficulty-based curriculum learning filters prompts to maximize gradient signal. However, such methods operate at a macro level—selecting which problems to train on—ignoring a critical micro-level distinction: not all errors are equal.

Within incorrect rollouts, we identify distinct regimes: \emph{exploratory errors} (benign stochastic deviations), \emph{self-correcting errors} (paths the model is already abandoning), and \emph{overconfident errors} (spuriously reinforced paths acting as value traps). Standard RLVR penalizes these uniformly. While the global KL penalty $\beta \KL(\pith \| \piref)$ offers some correction, it is symmetric and indiscriminate, suppressing beneficial exploration alongside harmful overconfidence.

\paragraph{Our contribution.} We propose to break this dilemma by introducing \emph{asymmetric} correction at the level of individual rollouts. Our method, \textbf{ACE} (\textbf{A}symmetric \textbf{C}onfidence-aware \textbf{E}rror penalty), dynamically amplifies the penalty for overconfident errors using a per-rollout confidence shift metric, while leaving exploratory and self-correcting errors largely untouched. Concretely, our contributions are:

\begin{enumerate}[leftmargin=2em]
    \item \textbf{A new analytical dimension.} We formalize \emph{error confidence shift} $c_i = \log(\pith(y_i|x) / \piref(y_i|x))$ as a per-rollout diagnostic that is orthogonal to prompt-level difficulty, and show empirically that overconfident errors accumulate during training (\S\ref{sec:prelim}).
    
    \item \textbf{Theoretical foundations.} We show that ACE's gradient admits a decomposition into a \emph{selective regularizer} targeting the overconfident portion of the policy, plus a residual term that partially moderates the regularizer's correction strength (\S\ref{sec:selective_kl}).
    
    \item \textbf{Empirical validation.} ACE consistently improves the full Pass@$k$ spectrum on MATH-500 and AIME 2025, across three model families (Qwen2.5-Math-7B, Qwen3-8B-Base, and Llama-3.1-8B-Instruct) and two base algorithms (GRPO and DAPO), with particularly strong gains at large $k$, confirming that it preserves and expands the reasoning boundary (\S\ref{sec:experiments}).
\end{enumerate}

\begin{figure}[t]
\centering
\includegraphics[width=0.9\textwidth]{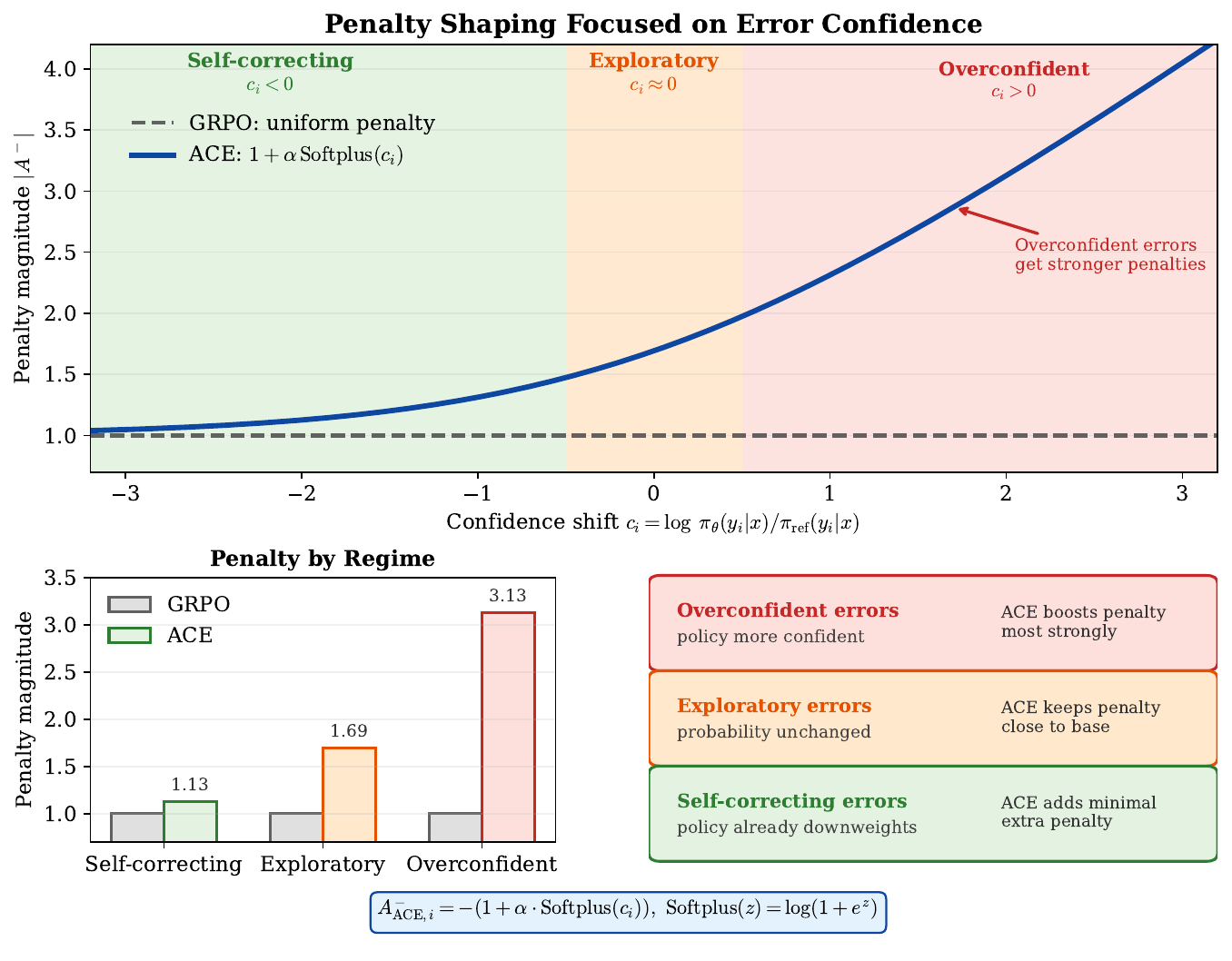}
\caption{\textbf{ACE Method Overview.} \emph{Top:} Incorrect rollouts fall into three regimes based on the confidence shift $c_i = \log(\pith(y_i|x)/\piref(y_i|x))$. \emph{Bottom-left:} Standard GRPO assigns a uniform penalty $|\hat{A}^-|$ to all errors regardless of regime. \emph{Bottom-right:} ACE modulates the penalty via $\text{Softplus}(c_i)$, strongly penalizing overconfident errors while leaving self-correcting errors nearly untouched.}
\label{fig:method_overview}
\end{figure}

Figure~\ref{fig:method_overview} highlights the core idea: ACE reshapes the negative penalty as a smooth, confidence-dependent curve, amplifying penalties for overconfident errors while keeping exploratory and self-correcting errors close to the base level.

\section{Related Work}
\label{sec:related}

\paragraph{Curriculum and advantage shaping.} Curriculum methods \citep{bae2025cures, parashar2025e2h, zhang2025clpo} select prompts by difficulty, operating at the \emph{prompt level}. Advantage shaping methods \citep{li2025rethinking, wen2025implicit} balance correct vs.\ incorrect samples at the \emph{group level}. ACE operates at the \emph{rollout level}, modulating penalties within incorrect samples based on per-rollout confidence shift.

\paragraph{KL regularization in RLHF/RLVR.} KL divergence penalties are standard in RLHF pipelines to prevent reward hacking and mode collapse \citep{ouyang2022instructgpt, stiennon2020summarize}. The typical formulation adds a global term $\beta \KL(\pith \| \piref)$ that symmetrically penalizes all deviations from the reference. DPO \citep{rafailov2023dpo} implicitly constrains the KL divergence through its closed-form reward parameterization. However, all these methods apply KL penalties \emph{uniformly} across correct and incorrect outputs alike, suppressing beneficial exploration alongside harmful overconfidence. ACE introduces an \emph{asymmetric} and \emph{selective} KL-like penalty that targets only overconfident errors while leaving correct outputs and self-correcting errors untouched.

\paragraph{Entropy regularization and clipping strategies.} Entropy bonuses have a long history in RL for encouraging exploration \citep{williams1992reinforce, schulman2017ppo}. In the LLM context, DAPO \citep{yu2025dapo} combats entropy collapse through its Clip-Higher strategy, which decouples the upper and lower clipping thresholds of the importance sampling ratio to give low-probability exploration tokens more room for probability increase. While such clipping-based strategies promote diversity globally, they operate at the \emph{token level} and cannot distinguish between beneficial diversity on correct reasoning paths and harmful persistence of incorrect ones. ACE provides a complementary, more targeted mechanism: rather than modifying the clipping bounds, it modulates the penalty magnitude per \emph{rollout} based on confidence shift, achieving diversity preservation as a \emph{consequence} of selectively suppressing overconfident errors (see \S\ref{sec:exp_entropy}).

\paragraph{Reward shaping.} Potential-based reward shaping \citep{ng1999policy, wiewiora2003principled, devlin2012dynamic} transforms the reward function to accelerate learning while preserving the optimal policy. ACE can be viewed through the reward shaping lens: the confidence-dependent term $\alpha \cdot \Softplus(c_i)$ acts as an auxiliary reward signal derived from the policy--reference divergence. Unlike classical potential-based shaping, ACE's shaping signal is \emph{asymmetric} (applied only to negative advantages) and \emph{adaptive} (it evolves with the policy). Process reward models \citep{lightman2023verify} offer another form of reward enrichment at the step level; ACE is complementary, operating at the trajectory level with zero additional annotation cost.

\paragraph{Diversity loss in RLVR.} Yue et al.\ \citep{yue2025limit} show RLVR narrows reasoning boundaries. Negative sample reinforcement \citep{zhu2025nsr} demonstrates the importance of learning from incorrect rollouts but does not differentiate among error types. We identify overconfident errors as a key mechanism driving diversity collapse and propose confidence-based differential penalization to address this.

\section{Preliminaries}
\label{sec:prelim}

\paragraph{Setting.} We consider a policy $\pith$ parameterized by $\theta$, initialized from a reference model $\piref$. Given a prompt $x \sim \cD$, the model generates $G$ rollouts $\{y_1, \ldots, y_G\}$. Each rollout receives a reward $r_i \in \mathbb{R}$ from a reward function or verifier. We define:
\begin{itemize}[leftmargin=2em]
    \item Empirical mean reward: $\hat{\mu}_x = \frac{1}{G} \sum_{i=1}^G r_i$
    \item Empirical reward standard deviation: $\hat{\sigma}_x = \sqrt{\frac{1}{G} \sum_{i=1}^G (r_i - \hat{\mu}_x)^2}$
    \item Above-average rollout set: $\Yp(x) = \{y_i : r_i > \hat{\mu}_x\}$
    \item Below-average rollout set: $\Ym(x) = \{y_i : r_i \leq \hat{\mu}_x\}$
\end{itemize}

\paragraph{GRPO objective.} In Group Relative Policy Optimization \citep{shao2024grpo}, the advantage for rollout $y_i$ is computed via group normalization:
\begin{equation}
    \hat{A}_i = \frac{r_i - \hat{\mu}_x}{\hat{\sigma}_x + \epsilon}
\end{equation}
where $\epsilon$ is a small constant for numerical stability. The clipped surrogate objective is:
\begin{equation}
\label{eq:grpo}
    \mathcal{L}_{\text{GRPO}}(\theta) = -\E_{x \sim \cD} \left[ \frac{1}{G} \sum_{i=1}^G \min\!\left( \rho_i \hat{A}_i,\; \text{clip}(\rho_i, 1{-}\epsilon_c, 1{+}\epsilon_c) \hat{A}_i \right) \right] + \beta \KL(\pith \| \piref)
\end{equation}
where $\rho_i = \pith(y_i|x) / \piold(y_i|x)$ is the importance sampling ratio and $\epsilon_c$ is the clipping threshold.

\paragraph{Observation: uniform penalty within groups.} For rollouts with identical rewards $r_i = r_j$, the advantages are also identical: $\hat{A}_i = \hat{A}_j$. In the special case of binary rewards, all incorrect rollouts ($r_i = 0$) share the same advantage:
\begin{equation}
    \hat{A}_i^- = \frac{-\hat{\mu}_x}{\hat{\sigma}_x + \epsilon}
\end{equation}
More generally, rollouts with the same reward receive \emph{identical} advantage values regardless of their qualitative differences. The only per-rollout modulation comes from the importance ratio $\rho_i$, which is bounded by clipping and provides limited differentiation.

\paragraph{Motivation: overconfident errors.}
\label{sec:empirical_pathology}
Define the per-rollout \emph{confidence shift} $c_i = \log(\pith(y_i|x) / \piref(y_i|x))$: positive values indicate the policy has become more confident than the reference on rollout $y_i$, while negative values indicate the opposite. Training Qwen2.5-Math-7B with standard GRPO on DAPO-Math-17K \citep{yu2025dapo}, we observe that the distribution of $c_i$ among \emph{incorrect} rollouts develops a heavy right tail as training progresses---a substantial fraction of errors become significantly more probable under the trained policy than under the reference, even though they remain incorrect. This is consistent with analyses of implicit reward distributions in preference optimization \citep{rafailov2024dpo, meng2024simpo}. These overconfident errors consume probability mass that would otherwise support diverse reasoning paths, contributing to the diversity collapse documented by \citet{yue2025limit}. Crucially, the standard global KL penalty $\beta \KL(\pith \| \piref)$ cannot selectively address this: it penalizes \emph{all} deviations from the reference proportionally, suppressing beneficial confidence growth on correct paths alongside harmful overconfidence on incorrect ones. This structural limitation motivates a \emph{targeted} correction mechanism (see \S\ref{sec:exp_dynamics} for detailed quantitative tracking).

\section{The ACE Method}
\label{sec:method}
\subsection{Error Confidence Score}
\label{sec:confidence_score}

\begin{definition}[Error Confidence Score]
\label{def:confidence}
For a prompt $x$ and an incorrect rollout $y_i \in \Ym(x)$, the \textbf{error confidence score} is:
\begin{equation}
    c_i \triangleq \log \frac{\pith(y_i | x)}{\piref(y_i | x)} = \sum_{t=1}^{T_i} \log \frac{\pith(y_i^{(t)} | x, y_i^{(<t)})}{\piref(y_i^{(t)} | x, y_i^{(<t)})}
\end{equation}
where $y_i^{(t)}$ denotes the $t$-th token and $T_i$ is the sequence length.
\end{definition}

The second equality decomposes the sequence-level confidence into a sum of \emph{token-level} log-ratios. This is important for two reasons: (a) it shows that $c_i$ is already computed as a byproduct of standard RLVR training (which requires $\log \pith$ and $\log \piref$ for the KL penalty and importance ratios), incurring \emph{zero additional compute}; and (b) it reveals that $c_i$ aggregates confidence shifts across all reasoning steps, naturally weighting tokens where the policy has diverged most from the reference.

\begin{remark}[Three regimes]
\label{remark:regimes}
The sign of $c_i$ partitions incorrect rollouts into interpretable regimes:
\begin{itemize}[leftmargin=2em]
    \item $c_i > 0$: \textbf{Overconfident errors.} The policy assigns \emph{higher} probability than the reference. These are spurious patterns actively learned during RL.
    \item $c_i \approx 0$: \textbf{Exploratory errors.} Probability approximately unchanged from the reference. Natural stochastic deviations.
    \item $c_i < 0$: \textbf{Self-correcting errors.} The policy has already reduced probability mass relative to the reference.
\end{itemize}
\end{remark}

\subsection{The ACE Advantage}
\label{sec:ace_advantage}

We restructure the negative advantage to depend on the per-rollout confidence score $c_i$.

\begin{definition}[ACE Advantage]
\label{def:ace}
For an incorrect rollout $y_i \in \Ym(x)$, the ACE advantage is:
\begin{equation}
\label{eq:ace_advantage}
    A_{\mathrm{ACE},i}^{-} = \hat{A}_i^{-} \cdot \left(1 + \alpha \cdot \Softplus(c_i)\right)
\end{equation}
where $\hat{A}_i^{-} = (r_i - \hat{\mu}_x)/(\hat{\sigma}_x + \epsilon)$ is the standard GRPO advantage for incorrect rollouts and $\alpha \geq 0$ is a hyperparameter controlling the correction strength. Since $\hat{A}_i^- < 0$ and $(1 + \alpha \cdot \Softplus(c_i)) \geq 1$, ACE strictly amplifies the magnitude of the penalty. For correct rollouts $y_i \in \Yp(x)$, we retain the standard GRPO advantage:
\begin{equation}
    A_{\mathrm{ACE},i}^{+} = \hat{A}_i = \frac{r_i - \hat{\mu}_x}{\hat{\sigma}_x + \epsilon}
\end{equation}
\end{definition}

\paragraph{Design rationale.} The Softplus function $\Softplus(z) = \log(1 + e^z)$ is chosen for three properties:
\begin{enumerate}[leftmargin=2em]
    \item \textbf{Asymptotic behavior.} When $c_i \gg 0$ (overconfident), $\Softplus(c_i) \approx c_i$: penalty scales linearly with the log-confidence ratio. When $c_i \ll 0$ (self-correcting), $\Softplus(c_i) \approx e^{c_i} \to 0$: penalty converges to the base GRPO advantage $\hat{A}_i^-$.
    \item \textbf{Smoothness.} Unlike $\max(0, c_i)$ (which has a non-differentiable kink at 0), Softplus is infinitely differentiable everywhere, ensuring smooth gradient flow.
    \item \textbf{Monotonicity.} $\Softplus$ is strictly increasing, so more confident errors always receive strictly larger penalties, consistent with our theoretical motivation.
\end{enumerate}

\paragraph{Comparison to uniform penalization.} To illustrate the effect of ACE, consider the binary reward case where rollouts receive $r_i \in \{0, 1\}$. Under standard GRPO, all incorrect rollouts ($r_i = 0$) share the same advantage $\hat{A}^- = -\hat{p}_x / (\hat{\sigma}_x + \epsilon)$, where $\hat{p}_x$ is the empirical pass rate and $\hat{\sigma}_x = \sqrt{\hat{p}_x(1 - \hat{p}_x)}$.

\paragraph{Difficulty-adaptive scaling.} Since ACE multiplies the standard GRPO advantage $\hat{A}_i^-$ by $(1 + \alpha \cdot \Softplus(c_i))$, it naturally inherits GRPO's difficulty-dependent scaling: easy prompts (high pass rate) produce larger $|\hat{A}_i^-|$, so errors on easy problems are penalized more heavily. The confidence modulation then provides \emph{additional} per-rollout differentiation within each difficulty level.

\paragraph{Penalty differentiation.} Under ACE:
\begin{equation}
    A_{\mathrm{ACE},i}^- = \hat{A}_i^- \cdot \left(1 + \alpha \log(1 + e^{c_i})\right) \quad \Longrightarrow \quad |A_{\mathrm{ACE},i}^-| \text{ is strictly increasing in } c_i
\end{equation}
Therefore, within the same group, an overconfident error ($c_i = 2$) receives a penalty $|\hat{A}^-| \cdot (1 + \alpha \cdot 2.13)$ while an exploratory error ($c_i = 0$) receives $|\hat{A}^-| \cdot (1 + \alpha \cdot 0.69)$, and a self-correcting error ($c_i = -3$) receives $|\hat{A}^-| \cdot (1 + \alpha \cdot 0.05)$. This provides fine-grained differentiation that is impossible under uniform penalization. The same principle extends to continuous rewards, where ACE differentiates among below-average rollouts based on their confidence scores.

\begin{insightbox}[ACE in One Sentence]
Standard RLVR punishes all wrong answers equally. ACE punishes wrong answers \emph{the model has learned to be confident in} much harder, while leaving natural exploration mistakes alone.
\end{insightbox}

\subsection{ACE-GRPO: Integration and Algorithm}
\label{sec:ace_grpo}

ACE modifies only the advantage computation for negative samples. Substituting the ACE advantage (Definition~\ref{def:ace}) into the GRPO objective (Eq.~\ref{eq:grpo}), the full ACE-GRPO objective is:
\begin{equation}
\label{eq:ace_grpo}
    \mathcal{L}_{\mathrm{ACE}}(\theta) = -\E_{x \sim \cD}\left[\frac{1}{G}\sum_{i=1}^G \left(\mathbb{I}[r_i{=}1] \cdot \mathcal{L}_i^+ + \mathbb{I}[r_i{=}0] \cdot \mathcal{L}_i^-\right)\right] + \beta \KL(\pith \| \piref)
\end{equation}
where:
\begin{align}
    \mathcal{L}_i^+ &= \min\!\left(\rho_i \hat{A}_i^+,\; \text{clip}(\rho_i, 1{-}\epsilon_c, 1{+}\epsilon_c) \hat{A}_i^+\right) \label{eq:psr_loss} \\
    \mathcal{L}_i^- &= \min\!\left(\rho_i A_{\mathrm{ACE},i}^-,\; \text{clip}(\rho_i, 1{-}\epsilon_c, 1{+}\epsilon_c) A_{\mathrm{ACE},i}^-\right) \label{eq:nsr_loss}
\end{align}
The positive advantages $\hat{A}_i^+$ retain the standard GRPO formulation.

\paragraph{Practical considerations.} In practice, we normalize $c_i$ by sequence length ($\bar{c}_i = c_i / T_i$) to ensure comparable penalty magnitudes across rollouts of different lengths. Additional implementation details (sequence-level vs.\ token-level aggregation, clipping choices) and a PyTorch implementation are provided in Appendix~\ref{app:implementation}. The full algorithm is given below.

\begin{algorithm}[t]
\caption{ACE-GRPO: Asymmetric Confidence-aware Error Penalty}
\label{alg:ace}
\begin{algorithmic}[1]
\REQUIRE Policy $\pith$, reference model $\piref$, prompt dataset $\cD$, group size $G$, ACE strength $\alpha$, clipping $\epsilon_c$, KL coefficient $\beta$
\FOR{each training step}
    \STATE Sample prompt batch $\{x_1, \ldots, x_B\} \sim \cD$
    \FOR{each prompt $x$ in batch}
        \STATE Generate $G$ rollouts $\{y_1, \ldots, y_G\} \sim \pith(\cdot | x)$
        \STATE Compute rewards $r_i \in \{0, 1\}$ via verifier
        \STATE Compute standard group advantages $\hat{A}_i$ via GRPO
        \FOR{each incorrect rollout $y_i$ with $r_i = 0$}
            \STATE $c_i \leftarrow \left(\sum_{t=1}^{T_i} \log \pith(y_i^{(t)} | \cdot) - \log \piref(y_i^{(t)} | \cdot)\right) / T_i$ \hfill {\small\color{gray} // Already computed}
            \STATE $A_{\mathrm{ACE},i}^- \leftarrow \hat{A}_i^- \cdot (1 + \alpha \cdot \log(1 + \exp(c_i)))$ \hfill {\small\color{gray} // Amplify uniform advantage}
        \ENDFOR
        \STATE Compute clipped surrogate loss (Eq.~\ref{eq:ace_grpo})
    \ENDFOR
    \STATE Update $\theta$ via gradient descent
\ENDFOR
\end{algorithmic}
\end{algorithm}

\subsection{Relationship to Selective Reverse KL Divergence}
\label{sec:selective_kl}

We now characterize the theoretical relationship between ACE's additional penalty and a selective regularizer that targets overconfident errors. Crucially, the equivalence is not exact: ACE implements a \emph{stop-gradient} (reward-shaping) view of the confidence score, which omits a residual term compared to the full regularizer gradient. We state the exact decomposition below.

\begin{theorem}[Selective Regularization Decomposition]
\label{thm:selective_kl}
Let $\mathcal{L}_{\mathrm{std}}(\theta)$ denote the standard policy gradient objective (Eq.~\ref{eq:grpo}) with uniform negative advantages $\hat{A}^-$, and let $\mathcal{L}_{\mathrm{ACE}}(\theta)$ denote the objective with ACE advantages (Eq.~\ref{eq:ace_advantage}). Define the \textbf{selective regularizer}:
\begin{equation}
\label{eq:selective_reg}
    \mathcal{R}_{\mathrm{sel}}(\theta) = \E_{x \sim \cD}\left[|\hat{A}^-(x)| \sum_{y \in \Ym(x)} \pith(y|x) \cdot \Softplus\!\left(\log \frac{\pith(y|x)}{\piref(y|x)}\right)\right]
\end{equation}
where $|\hat{A}^-(x)|$ is the magnitude of the standard GRPO negative advantage for prompt $x$. Assume rollouts are sampled on-policy from $\pith$. Then, in the infinite-sample limit ($G \to \infty$), the $\alpha$-dependent additional gradient from ACE decomposes \emph{exactly} as:
\begin{equation}
\label{eq:exact_decomposition}
    \Delta \nabla_\theta \;=\; -\alpha \nabla_\theta \mathcal{R}_{\mathrm{sel}}(\theta) \;+\; \alpha\, \E_{x \sim \cD}\!\left[|\hat{A}^-(x)| \sum_{y \in \Ym(x)} \pith(y|x)\, \sigma(c(y))\, \nabla_\theta \log \pith(y|x)\right]
\end{equation}
where $\sigma(c) = 1/(1 + e^{-c})$ is the sigmoid function (i.e., $\Softplus'(c)$). Equivalently, ACE implements the \emph{negative} gradient of $\mathcal{R}_{\mathrm{sel}}$ with the confidence modulation treated as a \emph{fixed reward signal} (stop-gradient on $c_i$), plus the residual term $\mathcal{E}(\theta)$:
\begin{equation}
\label{eq:residual}
    \mathcal{E}(\theta) = \E_{x \sim \cD}\!\left[|\hat{A}^-(x)| \sum_{y \in \Ym(x)} \pith(y|x)\, \sigma(c(y))\, \nabla_\theta \log \pith(y|x)\right]
\end{equation}

Moreover, for overconfident errors where $c(y) \gg 0$, $\Softplus(c) \approx c$, and the dominant term in $\mathcal{R}_{\mathrm{sel}}$ takes the form of a \emph{difficulty-weighted} reverse KL divergence restricted to overconfident incorrect trajectories:
\begin{equation}
\label{eq:approx_kl}
    \mathcal{R}_{\mathrm{sel}}(\theta) \;\approx\; \E_{x \sim \cD}\left[|\hat{A}^-(x)| \sum_{\substack{y \in \Ym(x) \\ c(y) > 0}} \pith(y|x) \cdot \log \frac{\pith(y|x)}{\piref(y|x)}\right]
\end{equation}
\end{theorem}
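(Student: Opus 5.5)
The plan is to compute both sides of \eqref{eq:exact_decomposition} explicitly at the current iterate and compare them term by term. I will use the convention that $\Delta\nabla_\theta$ denotes the $\alpha$-dependent change in the \emph{ascent} direction, i.e.\ $-\nabla_\theta(\mathcal{L}_{\mathrm{ACE}}-\mathcal{L}_{\mathrm{std}})$.

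\textbf{Step 1: isolate the ACE term.} Evaluate the gradient at $\theta=\theta_{\mathrm{old}}$, where on-policy sampling gives $\rho_i=1$. Then the clip is inactive, and $\nabla_\theta\min(\rho_i A,\mathrm{clip}(\rho_i)A)=A\,\nabla_\theta\rho_i=A\,\nabla_\theta\log\pith(y_i|x)$. Using Definition~\ref{def:ace} and $\hat{A}^-=-|\hat{A}^-(x)|$, the only $\alpha$-dependent part of $\mathcal{L}_{\mathrm{ACE}}-\mathcal{L}_{\mathrm{std}}$ is
\begin{equation*}
\E_{x}\Big[\tfrac{1}{G}\textstyle\sum_{i:\,y_i\in\Ym(x)}\alpha\,|\hat{A}^-(x)|\,\Softplus(c_i)\,\rho_i\Big].
\end{equation*}
In this expression $c_i$ and $|\hat{A}^-(x)|$ are treated as fixed weights. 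This is the reward-shaping view: ACE computes $c_i$ and the group statistics as advantages, which are detached from the graph.

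\textbf{Step 2: pass to the infinite-sample limit.} As $G\to\infty$, the empirical pass rate converges, so $|\hat{A}^-(x)|$ becomes a deterministic function of $x$. By the law of large numbers, $\tfrac1G\sum_{i:\,y_i\in\Ym}f(y_i)\to\sum_{y\in\Ym(x)}\pith(y|x)f(y)$, where membership in $\Ym(x)$ is fixed by the verifier and does not depend on $\theta$. Hence
\begin{equation*}
\Delta\nabla_\theta=-\alpha\,\E_x\Big[|\hat{A}^-(x)|\textstyle\sum_{y\in\Ym}\pith(y|x)\,\Softplus(c(y))\,\nabla_\theta\log\pith(y|x)\Big].
\end{equation*}

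\textbf{Step 3: differentiate $\mathcal{R}_{\mathrm{sel}}$ in full.} Treat $|\hat{A}^-(x)|$ as constant and use $\nabla\pith=\pith\nabla\log\pith$. Since $\piref$ is frozen, $\nabla_\theta c(y)=\nabla_\theta\log\pith(y|x)$. The product rule and $\Softplus'=\sigma$ then give
\begin{equation*}
\nabla_\theta\mathcal{R}_{\mathrm{sel}}=\E_x\Big[|\hat{A}^-|\textstyle\sum_{y\in\Ym}\pith\,\big(\Softplus(c)+\sigma(c)\big)\nabla_\theta\log\pith\Big].
\end{equation*}
Solving for the $\Softplus$ part and substituting into Step~2 yields $\Delta\nabla_\theta=-\alpha\nabla_\theta\mathcal{R}_{\mathrm{sel}}+\alpha\,\mathcal{E}(\theta)$. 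This is exactly \eqref{eq:exact_decomposition} with residual \eqref{eq:residual}. The residual is a positively weighted ascent on $\log\pith$ over errors, so it partially offsets the regularizer.

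\textbf{Step 4: the approximation \eqref{eq:approx_kl}.} Split the sum over $\Ym(x)$ by the sign of $c(y)$.
\begin{itemize}[leftmargin=2em]
\item On $c>0$, write $\Softplus(c)=c+\log(1+e^{-c})$. The correction term is $\le e^{-c}$, which is negligible when $c\gg0$.
\item On $c\le0$, use $\Softplus(c)\le e^{c}\le 1$, so this part is exponentially small when $c\ll0$. Near $c\approx0$ it is bounded by $\log 2$ times the mass $\pith(\Ym)$.
\end{itemize}
This leaves the difficulty-weighted reverse-KL-type term restricted to $c(y)>0$.

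\textbf{Main obstacle.} I expect the hard part to be making this last step and the limiting arguments honest rather than the algebra. The approximation is only asymptotic in the regime where most of the weight sits on $c\gg0$ or $c\ll0$, so I will state it as such rather than as a uniform bound. I also need to be explicit that the exact identity holds only at $\rho=1$, and only for the unnormalized $c_i$ of Definition~\ref{def:confidence}, not the length-normalized variant used in practice. Finally, it requires the stop-gradient treatment of $|\hat{A}^-(x)|$; the interchange of limit and gradient is justified, for a finite vocabulary and bounded length, by finiteness of the sum.
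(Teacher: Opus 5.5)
Your proposal is correct and follows the paper's proof essentially step for step. You treat $|\hat{A}^-(x)|\,\Softplus(c_i)$ as a detached reward weight, pass to $G \to \infty$ by the law of large numbers, and use the product rule with $\nabla_\theta c = \nabla_\theta \log \pith$ to split $\nabla_\theta \mathcal{R}_{\mathrm{sel}}$ into the Softplus-weighted score term (which ACE reproduces) and the residual $\mathcal{E}(\theta)$. Your additions tighten the same argument without changing it: evaluating at $\rho_i = 1$, making the stop-gradient on $|\hat{A}^-(x)|$ and the use of the unnormalized $c_i$ explicit, and bounding the error in Eq.~\eqref{eq:approx_kl}, which the paper states without proof. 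In fact your two bullets combine into $\Softplus(c) - \max(c,0) = \log(1 + e^{-|c|}) \in (0, \log 2]$, so you can state a uniform additive error bound of at most $\log 2 \cdot \E_x[|\hat{A}^-(x)|\,\pith(\Ym(x)|x)]$ rather than only an asymptotic one.
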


The proof is provided in Appendix~\ref{app:selective_kl_proof}. Intuitively, ACE's stop-gradient treatment of $\Softplus(c_i)$ captures the dominant selective-regularization component (Term~I: confidence-weighted probability suppression), while the residual $\mathcal{E}(\theta)$ corresponds to the through-$c_i$ gradient (Term~II) that the full regularizer would additionally apply. By omitting Term~II, ACE implements a \emph{tempered} version of $\mathcal{R}_{\mathrm{sel}}$---less aggressive than the full regularizer, but more targeted than standard GRPO. The per-prompt factor $|\hat{A}^-(x)|$ ensures that the selective regularizer inherits the difficulty-adaptive scaling of GRPO. In contrast to the global KL term $\beta \KL(\pith \| \piref)$ which indiscriminately pulls back \emph{all} deviations, $\mathcal{R}_{\mathrm{sel}}$ is (i) restricted to incorrect outputs ($y \in \Ym$), (ii) activated primarily by overconfidence ($c_i > 0$) due to Softplus saturation, (iii) independently tunable via $\alpha$, and (iv) difficulty-adaptive via the $|\hat{A}^-(x)|$ factor.

\subsection{Gradient Quality Analysis}
\label{sec:variance_analysis}

A natural question is whether ACE's confidence-dependent reweighting improves or degrades gradient quality. We analyze this in detail in Appendix~\ref{app:gradient_quality} and summarize the key results here.

First, ACE necessarily increases both the total gradient second moment and the directional variance---unavoidable consequences of additive reweighting where $(1 + \alpha \phi_i) > 1$ for all $\phi_i > 0$ (Proposition~\ref{prop:second_moment}). However, this does \emph{not} prevent quality improvement. We define the gradient quality ratio as $Q_d = \mu_d^2 / \sigma_d^2$, measuring the ratio of squared directional signal to directional variance. Under realistic conditions---specifically, when overconfident errors carry gradients aligned with the optimization direction ($\mathrm{Cov}(\phi_i, u_i) > 0$) and the baseline gradient is noisy ($Q_d^{\mathrm{std}} < 1$)---we prove that ACE strictly improves gradient quality: $Q_d^{\mathrm{ACE}} > Q_d^{\mathrm{std}}$ (Theorem~\ref{thm:variance_bound}). The key mechanism is that ACE's selective amplification concentrates extra weight on the most informative gradients, causing the signal to grow faster than the noise along the optimization-relevant direction.

\begin{insightbox}[Theoretical Insight: Gradient Efficiency]
ACE converts \emph{harmful variance into exploitable signal}. By concentrating penalty weight on overconfident errors---which have gradients more aligned with the optimization direction---ACE achieves higher signal-to-noise ratio despite increasing total gradient variance. This is the statistical foundation for ACE's improved learning efficiency.
\end{insightbox}

\section{Experiments: ACE Expands the Reasoning Boundary}
\label{sec:experiments}

\subsection{Experimental Setup}

\paragraph{Models.} We fine-tune Qwen2.5-Math-7B \citep{yang2024qwen25}, Qwen3-8B-Base \citep{yang2025qwen3}, and Llama-3.1-8B-Instruct \citep{dubey2024llama3} using GRPO implemented with VERL \citep{sheng2024verl}. Note that Qwen3-8B-Base is evaluated \emph{without} enabling the extended thinking mode (i.e., reasoning mode disabled). Llama-3.1-8B-Instruct is included in the main results (Tables~\ref{tab:math500_results} and \ref{tab:math_aime25_results}) to test cross-family generalization beyond the Qwen model family. For the detailed diagnostic experiments (\S\ref{sec:exp_dynamics}--\S\ref{sec:exp_entropy}), ablations (\S\ref{sec:ablation_modulation}), and hyperparameter sensitivity (Appendix~\ref{app:alpha_sensitivity}), we focus on the two Qwen models because: (i) they serve as the primary experimental subjects and already span two distinct pretraining recipes (math-specialized vs.\ general-purpose base model), providing sufficient diversity to validate the generality of our findings; and (ii) Llama-3.1-8B-Instruct operates in a substantially lower accuracy regime (e.g., near-floor on AIME 2025), which makes fine-grained diagnostics such as overconfident error distributions and entropy dynamics less statistically informative.

\paragraph{Training data.} We use the DAPO-Math-17K dataset \citep{yu2025dapo} as the training prompts. For the GRPO and ACE-GRPO baselines we use standard GRPO (symmetric clipping, with KL penalty); for DAPO and ACE-DAPO we use the full DAPO algorithm \citep{yu2025dapo} with Clip-Higher, dynamic sampling, and token-level loss.

\paragraph{Evaluation.} We evaluate on MATH-500 \citep{hendrycks2021math} and AIME 2025 using a rule-based math verifier for correctness verification.

\paragraph{Metrics.} We report Pass@$k$ for $k \in \{1, 2, 4, 8, 16, 32\}$ using temperature 0.7 and top-$p$ = 0.95. Pass@$k$ measures the probability that at least one of $k$ samples is correct. We use the unbiased estimator from \citet{chen2021codex}:
\begin{equation}
    \text{Pass@}k = \E_{x \sim \cD}\left[1 - \frac{\binom{n-c}{k}}{\binom{n}{k}}\right]
\end{equation}
where $n$ is the total samples and $c$ is the number correct. Pass@1 reflects exploitation; large-$k$ reflects exploration and reasoning boundary.

\paragraph{Baselines.}
\begin{itemize}[leftmargin=2em]
    \item \textbf{Base model}: Unmodified pretrained model (upper bound for large-$k$ diversity).
    \item \textbf{GRPO}: Standard Group Relative Policy Optimization \citep{shao2024grpo}.
    \item \textbf{DAPO}: The full DAPO algorithm \citep{yu2025dapo}, which uses asymmetric clipping (Clip-Higher), dynamic sampling, and token-level loss, trained on the same DAPO-Math-17K dataset.
    \item \textbf{ACE-GRPO}: Our method (ACE applied to GRPO).
    \item \textbf{ACE-DAPO}: Our method applied on top of DAPO, demonstrating composability with orthogonal diversity-preserving strategies.
\end{itemize}

\paragraph{Hyperparameters.} For ACE, we set $\alpha = 1.0$ as the default. We use normalized confidence scores $\bar{c}_i = c_i / T_i$. Full training hyperparameters are provided in Appendix~\ref{app:hyperparameters}.

\paragraph{Fair comparison.} Within each model, we keep the training recipe and budget matched across methods; see Appendix~\ref{app:hyperparameters}.
\subsection{Main Results: Full Pass@k Spectrum}
\label{sec:main_results}

Table~\ref{tab:math500_results} reports Pass@$k$ on MATH-500 and Table~\ref{tab:math_aime25_results} reports results on AIME 2025.

\begin{table}[t]
\centering
\caption{Pass@$k$ (\%) on MATH-500. We report mean $\pm$ 95\% confidence interval over 5 independent training runs. \textbf{Bold} = best within each model group; \underline{underline} = second best.}
\label{tab:math500_results}
\resizebox{\textwidth}{!}{%
\begin{tabular}{lccccccc}
\toprule
\textbf{Model} & \textbf{@1} & \textbf{@2} & \textbf{@4} & \textbf{@8} & \textbf{@16} & \textbf{@32} \\
\midrule
Qwen2.5-Math-7B & 63.0 & 76.3 & 83.2 & 88.1 & 91.2 & 93.5 \\
Qwen2.5-Math-7B + GRPO & 73.4\tpm{0.8} & 79.5\tpm{0.7} & 83.2\tpm{0.7} & 86.2\tpm{0.5} & 89.7\tpm{0.5} & 91.3\tpm{0.5} \\
Qwen2.5-Math-7B + DAPO & 74.5\tpm{1.0} & 80.8\tpm{0.9} & 84.8\tpm{0.8} & 89.5\tpm{0.7} & 93.0\tpm{0.7} & 94.6\tpm{0.6} \\
Qwen2.5-Math-7B + ACE-GRPO & 74.2\tpm{0.7} & 80.9\tpm{0.7} & 84.5\tpm{0.6} & 88.9\tpm{0.5} & 92.6\tpm{0.5} & 94.3\tpm{0.4} \\
Qwen2.5-Math-7B + ACE-DAPO & \textbf{75.1}\tpm{0.8} & \textbf{82.4}\tpm{0.8} & \textbf{86.2}\tpm{0.6} & \textbf{91.2}\tpm{0.5} & \textbf{94.7}\tpm{0.5} & \textbf{96.1}\tpm{0.5} \\
\midrule
Qwen3-8B-Base & 60.2 & 72.5 & 78.8 & 83.9 & 87.2 & 90.6 \\
Qwen3-8B-Base + GRPO & 69.4\tpm{0.9} & 75.5\tpm{0.9} & 79.3\tpm{0.9} & 82.5\tpm{0.9} & 86.2\tpm{0.8} & 88.6\tpm{0.7} \\
Qwen3-8B-Base + DAPO & 70.8\tpm{1.0} & 76.8\tpm{0.9} & 81.1\tpm{0.9} & 84.3\tpm{0.8} & 88.1\tpm{0.8} & 90.4\tpm{0.8} \\
Qwen3-8B-Base + ACE-GRPO & 70.1\tpm{0.7} & 76.5\tpm{0.7} & 81.1\tpm{0.6} & 84.5\tpm{0.6} & 88.5\tpm{0.6} & 91.1\tpm{0.5} \\
Qwen3-8B-Base + ACE-DAPO & \textbf{71.2}\tpm{0.9} & \textbf{77.5}\tpm{0.9} & \textbf{82.3}\tpm{0.8} & \textbf{85.4}\tpm{0.8} & \textbf{89.4}\tpm{0.7} & \textbf{91.6}\tpm{0.7} \\
\midrule

Llama-3.1-8B-Instruct & 48.1 & 59.9 & 67.8 & \textbf{74.8} & \textbf{80.5} & \textbf{84.8} \\
Llama-3.1-8B-Instruct + GRPO & 52.9\tpm{1.1} & 60.5\tpm{1.0} & 67.3\tpm{0.9} & 71.8\tpm{0.9} & 75.5\tpm{0.9} & 79.3\tpm{0.8} \\
Llama-3.1-8B-Instruct + DAPO & \underline{54.3}\tpm{1.0} & 61.8\tpm{1.0} & 68.9\tpm{1.0} & 72.9\tpm{1.0} & 76.8\tpm{0.9} & 80.4\tpm{0.9} \\
Llama-3.1-8B-Instruct + ACE-GRPO & 54.1\tpm{1.1} & \underline{62.2}\tpm{1.1} & \underline{69.1}\tpm{1.0} & 73.5\tpm{1.0} & 76.8\tpm{0.9} & 81.5\tpm{0.9} \\
Llama-3.1-8B-Instruct + ACE-DAPO & \textbf{55.4}\tpm{1.1} & \textbf{62.8}\tpm{1.1} & \textbf{70.2}\tpm{1.0} & \underline{74.1}\tpm{1.0} & \underline{77.9}\tpm{0.9} & \underline{82.1}\tpm{0.9} \\
\bottomrule
\end{tabular}%
}
\end{table}

\begin{table}[t]
\centering
\caption{Pass@$k$ (\%) on AIME 2025. AIME 2025 contains 30 problems; we report point estimates as confidence intervals are dominated by test-set size rather than training variance. \textbf{Bold} = best within each model group; \underline{underline} = second best.}
\label{tab:math_aime25_results}
\begin{tabular}{lccccccc}
\toprule
\textbf{Model} & \textbf{1} & \textbf{2} & \textbf{4} & \textbf{8} & \textbf{16} & \textbf{32} \\
\midrule
Qwen2.5-Math-7B & 6.3 & 9.9 & 13.8 & 17.5 & 21.9 & 26.7 \\
Qwen2.5-Math-7B + GRPO & 10.5 & 14.9 & 19.7 & 23.9 & 28.6 & 33.7 \\
Qwen2.5-Math-7B + DAPO & 11.5 & 16.7 & 22.5 & 27.5 & 31.8 & 37.1 \\
Qwen2.5-Math-7B + ACE-GRPO & 11.2 & 16.0 & 21.2 & 26.1 & 30.6 & 36.4 \\
Qwen2.5-Math-7B + ACE-DAPO & \textbf{11.7} & \textbf{17.4} & \textbf{23.8} & \textbf{28.5} & \textbf{33.1} & \textbf{38.6} \\
\midrule
Qwen3-8B-Base & 5.1 & 9.2 & 11.6 & 14.2 & 17.0 & 19.6  \\
Qwen3-8B-Base + GRPO & 9.7 & 13.9 & 17.4 & 22.5 & 25.7 & 29.8 \\
Qwen3-8B-Base + DAPO & 11.1 & 15.7 & 19.9 & 25.2 & 28.5 & 33.1 \\ 
Qwen3-8B-Base + ACE-GRPO & 10.5 & 15.5 & 19.6 & 24.7 & 27.9 & 32.4 \\
Qwen3-8B-Base + ACE-DAPO & \textbf{11.2} & \textbf{16.9} & \textbf{21.2} & \textbf{26.3} & \textbf{29.9} & \textbf{34.4} \\
\midrule
Llama-3.1-8B-Instruct & 0.2 & \textbf{0.7} & \textbf{1.2}  & \textbf{3.2} & \textbf{7.1} & \textbf{10.8} \\ 
Llama-3.1-8B-Instruct + GRPO & 0.2 & 0.3 & 0.5 & 2.1 & 3.0 & 7.0 \\
Llama-3.1-8B-Instruct + DAPO & \textbf{0.3} & 0.3 & 0.6 & 1.9 & 2.8 & 6.3  \\
Llama-3.1-8B-Instruct + ACE-GRPO & \textbf{0.3} & 0.3 & 0.5 & \underline{2.2} & \underline{3.9} & \underline{8.2} \\
Llama-3.1-8B-Instruct + ACE-DAPO & 0.2 & 0.3 & 0.6 & 2.0 & 3.2 & 7.1 \\
\bottomrule
\end{tabular}
\end{table}

\paragraph{Key findings (Qwen2.5-Math-7B).} As shown in Figure~\ref{fig:benchmark_results}, ACE consistently improves larger-$k$ metrics while maintaining comparable Pass@1. On MATH-500, ACE-GRPO improves Pass@32 from 91.3\% to 94.3\% (+3.0pp) over GRPO; ACE-DAPO further pushes Pass@32 to 96.1\% (+1.5pp over DAPO's 94.6\%). On AIME 2025, ACE-GRPO improves Pass@32 from 33.7\% to 36.4\% (+2.7pp); ACE-DAPO reaches 38.6\% (+1.5pp over DAPO's 37.1\%). Notably, ACE-DAPO achieves the strongest results across all $k$, demonstrating that ACE composes effectively with orthogonal diversity-preserving strategies.

\paragraph{Key findings (Qwen3-8B-Base).} The same pattern holds on a different model family. On MATH-500, ACE-GRPO improves Pass@32 from 88.6\% to 91.1\% (+2.5pp); ACE-DAPO reaches 91.6\% (+1.2pp over DAPO's 90.4\%). On AIME 2025, ACE-GRPO improves Pass@32 from 29.8\% to 32.4\% (+2.6pp); ACE-DAPO reaches 34.4\% (+1.3pp over DAPO's 33.1\%).

\paragraph{Key findings (Llama-3.1-8B-Instruct).} To test cross-family generalization, we evaluate on a non-Qwen model. On MATH-500, ACE-GRPO improves Pass@32 from 79.3\% to 81.5\% (+2.2pp); ACE-DAPO reaches 82.1\% (+1.7pp over DAPO's 80.4\%). On AIME 2025---where Llama-3.1-8B-Instruct operates near the floor---ACE-GRPO improves Pass@32 from 7.0\% to 8.2\% (+1.2pp), demonstrating that ACE's mechanism transfers across model families even under low-accuracy regimes.

These consistent gains across all three model families confirm the generality of ACE's mechanism.

\paragraph{Interaction with DAPO's Clip-Higher.}
A natural observation is that ACE's marginal gain over DAPO is smaller than 
over GRPO (e.g., on MATH-500 Qwen2.5-Math-7B Pass@32: +3.0pp for ACE-GRPO 
vs.\ GRPO, but +1.5pp for ACE-DAPO vs.\ DAPO). This reflects a genuine 
mechanism overlap: DAPO's Clip-Higher preserves diversity by limiting how 
aggressively \emph{any} incorrect path is suppressed at the token level, 
which indirectly reduces the overconfident-error pathology that ACE targets. 
However, DAPO's protection is \emph{indiscriminate}---it shields overconfident 
errors and exploratory errors alike, because token-level clipping cannot 
distinguish trajectory-level confidence regimes. ACE provides the missing 
selectivity: it amplifies suppression specifically for errors the model has 
learned to be confident in, while leaving exploratory errors untouched. 
The consistent gains of ACE-DAPO over DAPO across all model families and 
benchmarks indicate that this trajectory-level selectivity captures a 
dimension of the overconfidence problem that token-level clipping alone 
cannot resolve. The diminishing marginal returns are expected---both methods 
partially address the same pathology---but the residual improvement confirms 
that ACE's rollout-level discrimination provides value beyond what DAPO's 
uniform token-level mechanism achieves.

\begin{takeawaybox}[Main Result]
ACE preserves Pass@1 performance while significantly expanding the reasoning boundary at large $k$. Across three model families (Qwen2.5-Math-7B, Qwen3-8B-Base, Llama-3.1-8B-Instruct) and two benchmarks (MATH-500 and AIME 2025), ACE-GRPO consistently improves Pass@32 by +2.2--3.0pp over GRPO. Moreover, ACE composes with DAPO: ACE-DAPO achieves the best overall results (e.g., 96.1\% on MATH-500 Pass@32), improving over DAPO by +1.2--1.7pp, confirming that ACE provides complementary correction orthogonal to token-level diversity strategies and generalizes across model families.
\end{takeawaybox}

\begin{figure}[h]
\centering
\includegraphics[width=\textwidth]{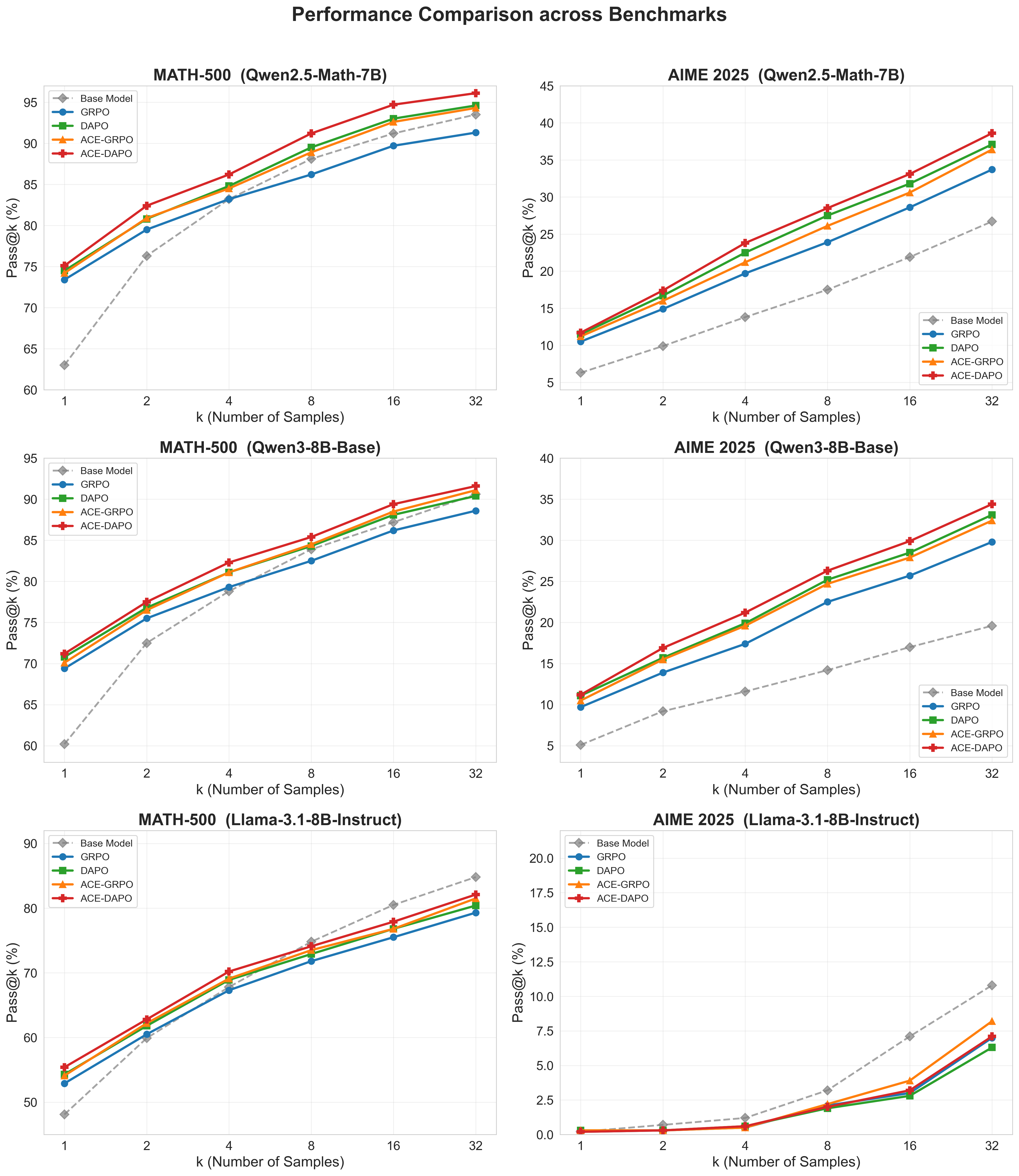}
\caption{\textbf{Performance Comparison across Benchmarks.} Pass@$k$ curves for all five methods on MATH-500 (left column) and AIME 2025 (right column) across three model families: Qwen2.5-Math-7B (top row), Qwen3-8B-Base (middle row), and Llama-3.1-8B-Instruct (bottom row). ACE-GRPO and ACE-DAPO consistently outperform their respective baselines (GRPO and DAPO) across all sampling budgets, model families, and benchmarks, with larger gains at higher $k$ values. ACE-DAPO achieves the best overall performance, confirming that ACE's rollout-level correction composes with DAPO's token-level diversity preservation and generalizes across model families.}
\label{fig:benchmark_results}
\end{figure}

\subsection{Experiment 1: Overconfident Error Dynamics}
\label{sec:exp_dynamics}

\paragraph{Goal.} Quantify the prevalence of overconfident errors during training and demonstrate that ACE effectively reduces them.

\paragraph{Design.} Track the distribution of $c_i$ among incorrect rollouts throughout training for both standard GRPO and ACE-GRPO on the two Qwen models.\footnote{We omit Llama-3.1-8B-Instruct from the diagnostic experiments as its lower baseline accuracy yields fewer correct rollouts per group, making the confidence shift statistics noisier and less informative. The main results in Tables~\ref{tab:math500_results}--\ref{tab:math_aime25_results} confirm that ACE's gains transfer to Llama.} At checkpoints every 25 training steps, generate 32 rollouts per prompt on a held-out set and record $c_i$ for all incorrect rollouts.

\paragraph{Metrics.}
\begin{itemize}[leftmargin=2em]
    \item \textbf{Overconfident error fraction}: $\text{OEF}(t) = |\{y_i \in \Ym : c_i > 0\}| / |\Ym|$ at step $t$.
    \item \textbf{Mean overconfidence magnitude}: $\E[c_i \mid c_i > 0, r_i = 0]$ at step $t$.
    \item \textbf{Token-level entropy}: Average per-token entropy of the policy.
\end{itemize}

\paragraph{Results.}
The core claim of ACE is that standard GRPO allows incorrect rollouts to become increasingly overconfident during training, and that ACE's asymmetric penalty should counteract this pathology. To test this, we track two complementary diagnostics at every checkpoint (Figure~\ref{fig:oef_dynamics}): (i) the \emph{overconfident error fraction} (OEF), which measures the proportion of incorrect rollouts whose confidence has grown relative to the reference policy ($c_i > 0$), and (ii) the \emph{mean overconfidence magnitude} among those overconfident errors, which captures the severity of the problem. Throughout training, ACE-GRPO maintains a lower OEF and a lower mean overconfidence magnitude than standard GRPO at every recorded checkpoint, indicating that ACE consistently suppresses both the prevalence and the severity of high-confidence incorrect rollouts.

\begin{figure}[h]
\centering
\includegraphics[width=\textwidth]{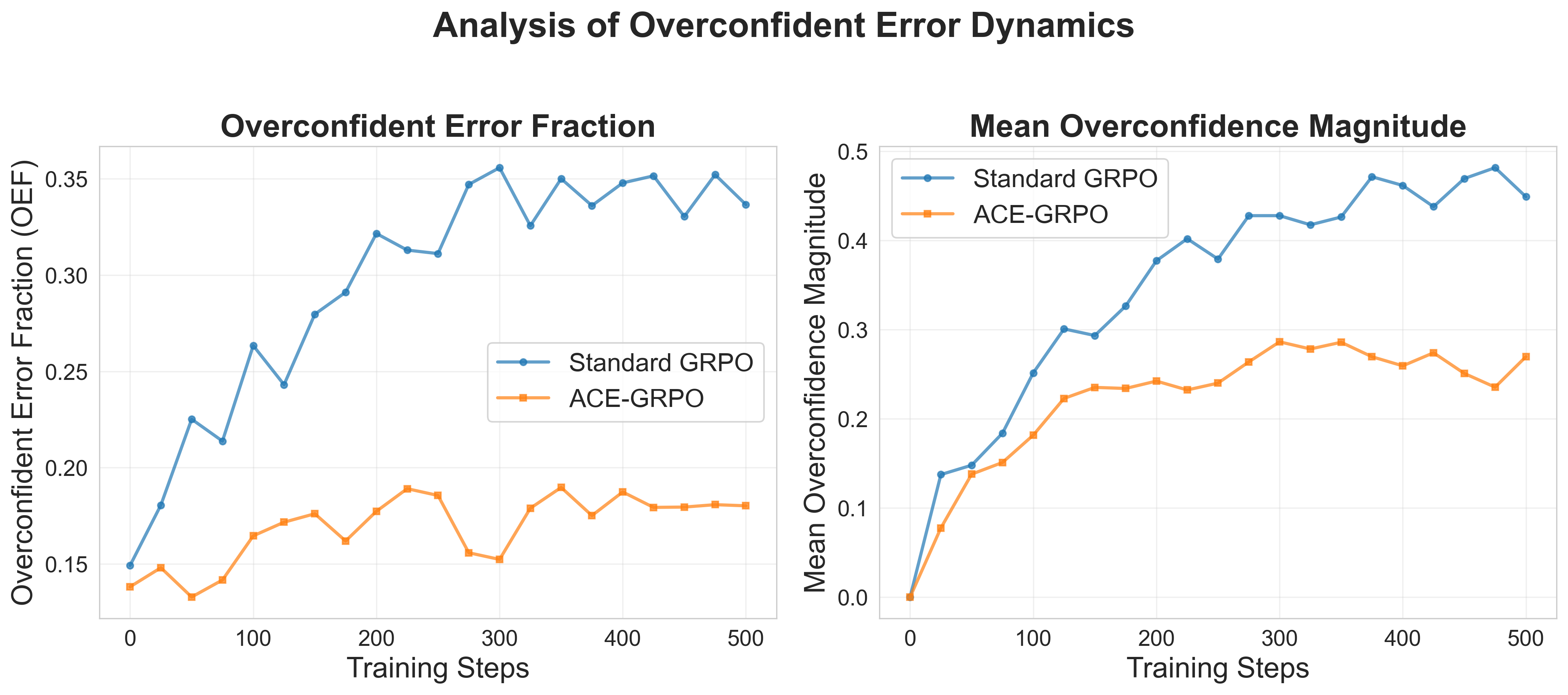}
\caption{\textbf{Overconfident Error Dynamics.} Left: Overconfident error fraction (OEF) over training. Right: Mean overconfidence magnitude for $c_i > 0$ errors. ACE-GRPO effectively suppresses both metrics compared to standard GRPO.}
\label{fig:oef_dynamics}
\end{figure}

\subsection{Experiment 2: Entropy Dynamics}
\label{sec:exp_entropy}

\paragraph{Goal.} Verify that ACE preserves generation diversity by tracking entropy throughout training, and establish the connection between entropy and Pass@$k$ performance.

\paragraph{Design.} Over the first 20 training steps, compute the average per-token entropy of the policy on a held-out subset of DAPO-Math-17K prompts:
\begin{equation}
    H(t) = -\frac{1}{|\mathcal{D}_{\text{val}}|} \sum_{x \in \mathcal{D}_{\text{val}}} \frac{1}{T} \sum_{j=1}^{T} \sum_{v} \pi_\theta(v | x, y_{<j}) \log \pi_\theta(v | x, y_{<j})
\end{equation}
where $T$ is the average sequence length and $v$ ranges over the vocabulary.

\paragraph{Results.}
A key concern with aggressive error suppression is that it may cause premature mode collapse, concentrating probability mass on a narrow set of outputs and destroying the diversity needed for high Pass@$k$ at large $k$. To diagnose this, we track average per-token entropy $H(t)$ over the early phase of training, where entropy decay is most rapid, for both standard GRPO and ACE-GRPO (Figure~\ref{fig:entropy_dynamics}). Standard GRPO exhibits a sharp entropy drop within the first 20 steps, retaining only a small fraction of its initial entropy. In contrast, ACE-GRPO decays substantially more slowly, preserving a much larger fraction of the initial entropy over the same period. This gap correlates with Pass@$k$ performance at large $k$: the method that retains more entropy also achieves higher coverage, confirming that ACE's selective penalty avoids premature mode collapse while still suppressing overconfident errors.

\begin{figure}[h]
\centering
\includegraphics[width=\textwidth]{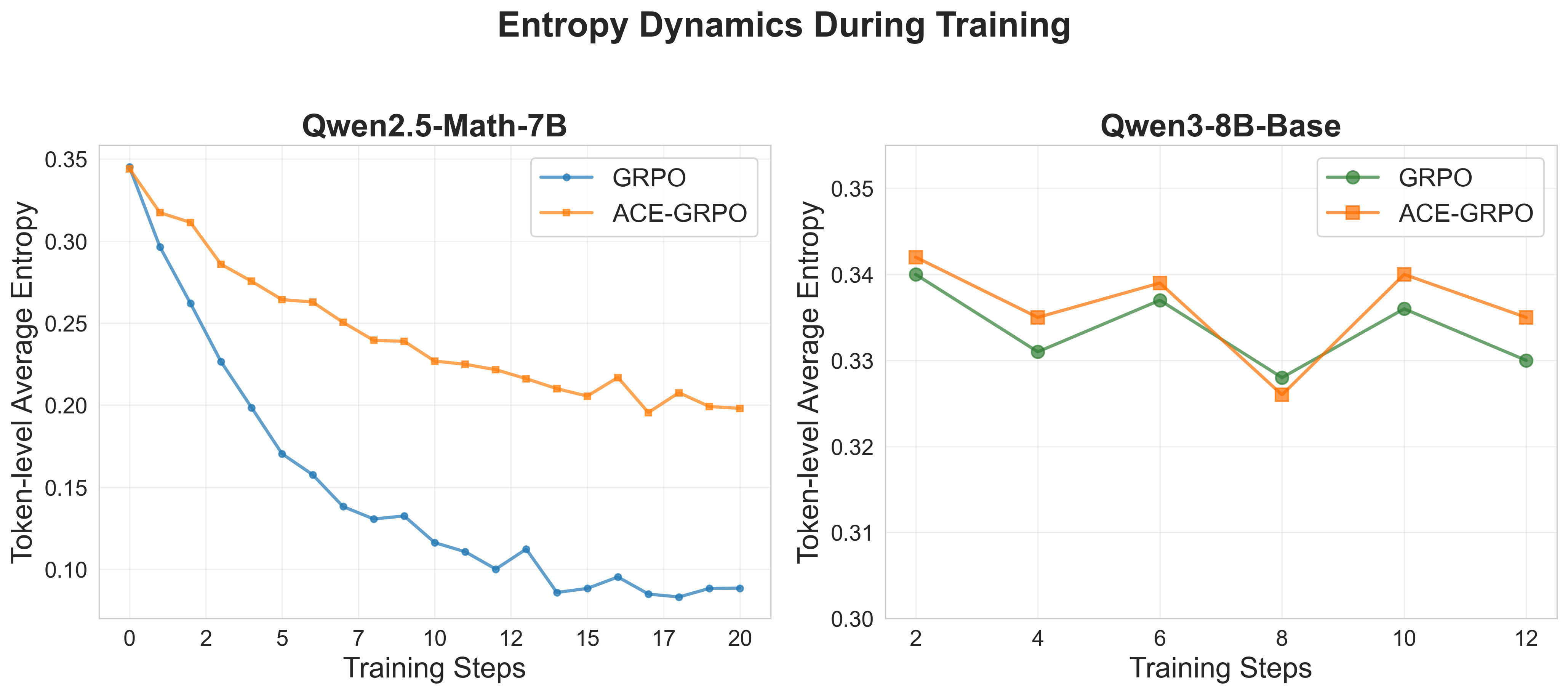}
\caption{\textbf{Entropy Dynamics.} Token-level entropy over the first 20 training steps. Left: On Qwen2.5-Math-7B, ACE-GRPO retains substantially more entropy than standard GRPO, which suffers rapid entropy collapse. Right: On Qwen3-8B-Base, ACE-GRPO maintains more stable entropy, demonstrating consistency across architectures. We report entropy dynamics for the two Qwen models only; Llama-3.1-8B-Instruct is excluded because its lower baseline accuracy makes the entropy signal less directly comparable (see \S\ref{sec:experiments} for discussion).}
\label{fig:entropy_dynamics}
\end{figure}

\subsection{Ablation: Choice of Modulation Function}
\label{sec:ablation_modulation}

A natural question is whether the choice of $\Softplus$ as the modulation function is important, or whether a simpler alternative such as $\text{ReLU}(c_i) = \max(0, c_i)$ suffices. We compare the two variants on MATH-500 using Qwen2.5-Math-7B with $\alpha = 1.0$ (the ablation uses a single representative model to isolate the effect of the modulation function; the main results in Table~\ref{tab:math500_results} confirm that ACE's gains are consistent across all three model families):
\begin{itemize}[leftmargin=2em]
    \item \textbf{ACE-Softplus} (default): $A_{\mathrm{ACE},i}^- = \hat{A}_i^- \cdot (1 + \alpha \cdot \Softplus(c_i))$
    \item \textbf{ACE-ReLU}: $A_{\mathrm{ACE},i}^- = \hat{A}_i^- \cdot (1 + \alpha \cdot \text{ReLU}(c_i))$
\end{itemize}
ReLU completely ignores self-correcting and exploratory errors ($c_i \leq 0$), providing zero modulation in that regime, while Softplus provides a smooth, everywhere-positive modulation that transitions gradually.

\begin{table}[h]
\centering
\caption{Ablation: modulation function on MATH-500 (Qwen2.5-Math-7B, $\alpha = 1.0$).}
\label{tab:ablation_modulation}
\begin{tabular}{lcccccc}
\toprule
\textbf{Method} & \textbf{@1} & \textbf{@2} & \textbf{@4} & \textbf{@8} & \textbf{@16} & \textbf{@32} \\
\midrule
GRPO (baseline) & 73.4 & 79.5 & 83.2 & 86.2 & 89.7 & 91.3 \\
ACE-ReLU & 73.2 & 80.3 & 83.9 & 87.6 & 91.2 & 93.1 \\
ACE-Softplus (ours) & 74.2 & 80.9 & 84.5 & 88.9 & 92.6 & 94.3 \\
\bottomrule
\end{tabular}
\end{table}

\paragraph{Analysis.} Both ACE-ReLU and ACE-Softplus outperform standard GRPO across all $k > 1$, confirming that confidence-aware modulation---regardless of the specific activation---is beneficial. However, ACE-Softplus consistently outperforms ACE-ReLU, with the gap widening at larger $k$ (+1.2 pp at Pass@32). This advantage stems from two properties of Softplus. First, \emph{smoothness}: ReLU has a non-differentiable kink at $c_i = 0$, creating a discontinuity in the gradient landscape that can destabilize training, whereas Softplus provides smooth gradient flow everywhere. Second, \emph{non-zero modulation near the boundary}: ReLU assigns zero modulation to all errors with $c_i \leq 0$, treating them identically to standard GRPO. In contrast, Softplus($0$) $= \ln 2 \approx 0.69$, providing a gentle baseline modulation that enables finer differentiation among borderline errors near $c_i \approx 0$---precisely the regime where errors may be transitioning from exploratory to overconfident. These results empirically validate the design rationale in \S\ref{sec:ace_advantage}.

\subsection{Analysis: Mechanism Behind Diversity Preservation}
\label{sec:entropy}

The experimental results above (\S\ref{sec:exp_dynamics}--\S\ref{sec:exp_entropy}) reveal a consistent mechanism: standard GRPO's uniform penalties allow overconfident errors to form ``probability sinks'' that crowd out valid reasoning paths---the pathology identified by \citet{yue2025limit} as the root cause of RLVR's narrowing reasoning boundary. ACE's asymmetric penalties break this cycle: the selective KL term (Theorem~\ref{thm:selective_kl}) acts as entropy regularization restricted to the overconfident region, while leaving exploratory errors ($c_i \leq 0$) untouched. This targeted correction redistributes probability mass to alternative reasoning paths, explaining ACE's improvements across the full Pass@$k$ spectrum.

\section{Limitations and Future Work}
\label{sec:limitations}

\paragraph{Dependence on reference model quality.} ACE uses $\piref$ to define overconfidence. If the reference model is poorly calibrated, the confidence score $c_i$ may not reliably indicate spurious patterns. Exploring alternatives (e.g., using a moving average of recent checkpoints) is a direction for future work.

\paragraph{Binary rewards only.} Our current formulation assumes binary rewards ($r \in \{0, 1\}$). Extending ACE to continuous or partial rewards (e.g., from process reward models) requires redefining what constitutes an ``overconfident error'' in the presence of graded feedback.

\paragraph{Interaction with long CoT.} Extended reasoning models (e.g., with $>$10K token outputs) may exhibit different confidence shift dynamics. The sequence-length normalization ($\bar{c}_i = c_i / T_i$) may need refinement for very long chains.

\section{Conclusion}
\label{sec:conclusion}

We identified a previously overlooked pathology in RLVR training: the accumulation of overconfident errors---incorrect reasoning paths that the RL process spuriously reinforces. We proposed ACE, a simple modification to the advantage function that dynamically amplifies penalties for overconfident errors while leaving exploratory errors untouched.

\begin{takeawaybox}[Core Contributions]
\textbf{(1)} We formalize \emph{error confidence shift} as a new per-rollout diagnostic orthogonal to prompt difficulty, revealing that overconfident errors accumulate during RLVR and drive diversity collapse.
\textbf{(2)} ACE's gradient decomposes into a \emph{selective regularizer} on overconfident errors plus a tempering residual, providing principled theoretical grounding.
\textbf{(3)} ACE improves the full Pass@$k$ spectrum---especially at large $k$---without sacrificing Pass@1, adding only a single Softplus computation per incorrect rollout.
\end{takeawaybox}

\bibliographystyle{unsrtnat}
\bibliography{references}

\appendix

\section{Proof of Theorem~\ref{thm:selective_kl} (Selective Regularization Decomposition)}
\label{app:selective_kl_proof}

\begin{proof}
The gradient of $\mathcal{L}_{\mathrm{ACE}}$ differs from $\mathcal{L}_{\mathrm{std}}$ only in the negative advantage terms. We analyze the $\alpha$-dependent component. Since $A_{\mathrm{ACE},i}^- = \hat{A}_i^- \cdot (1 + \alpha \cdot \Softplus(c_i))$, the additional gradient relative to standard GRPO is:
\begin{equation}
    \Delta \nabla_\theta = -\alpha \E_{x \sim \cD}\left[\frac{|\hat{A}^-(x)|}{G}\sum_{y_i \in \Ym(x)} \Softplus(c_i) \cdot \nabla_\theta \log \pith(y_i|x)\right]
\end{equation}
Here $|\hat{A}^-(x)|$ is a per-prompt scalar (constant across rollouts within a group) that does not depend on $y_i$. This is the standard REINFORCE form: $|\hat{A}^-(x)| \cdot \Softplus(c_i)$ acts as a \emph{scalar reward} multiplying the score function, with $c_i$ treated as not depending on $\theta$ (the ``stop-gradient'' convention standard in policy gradient methods).

As $G \to \infty$, by the law of large numbers:
\begin{align}
    \Delta \nabla_\theta &\to -\alpha \E_{x \sim \cD}\left[|\hat{A}^-(x)| \sum_{y \in \Ym(x)} \pith(y|x) \cdot \Softplus(c(y)) \cdot \nabla_\theta \log \pith(y|x)\right] \nonumber \\
    &= -\alpha \E_{x \sim \cD}\left[|\hat{A}^-(x)| \sum_{y \in \Ym(x)} \Softplus(c(y)) \cdot \nabla_\theta \pith(y|x)\right] \label{eq:ace_gradient}
\end{align}
using $\pith(y|x) \nabla_\theta \log \pith(y|x) = \nabla_\theta \pith(y|x)$.

Now, the true gradient of $\mathcal{R}_{\mathrm{sel}}(\theta)$ requires differentiating $|\hat{A}^-(x)| \cdot \pith(y|x) \cdot \Softplus(c(y))$ where $\pith(y|x)$ and $\Softplus(c(y))$ both depend on $\theta$ (since $c(y) = \log \pith(y|x) - \log \piref(y|x)$). Since $|\hat{A}^-(x)|$ is a per-prompt scalar, it factors out, and by the product rule:
\begin{align}
    \nabla_\theta \left[\pith(y|x) \cdot \Softplus(c)\right] &= \underbrace{\Softplus(c) \cdot \nabla_\theta \pith(y|x)}_{\text{Term~I: captured by ACE}} + \underbrace{\pith(y|x) \cdot \sigma(c) \cdot \nabla_\theta \log \pith(y|x)}_{\text{Term~II: residual}} \label{eq:product_rule}
\end{align}
where $\sigma(c) = \Softplus'(c) = 1/(1 + e^{-c})$ and $\nabla_\theta c = \nabla_\theta \log \pith(y|x)$.

Multiplying by $|\hat{A}^-(x)|$, summing over $y \in \Ym(x)$, and taking expectations, Eq.~\eqref{eq:ace_gradient} matches exactly $-\alpha \cdot |\hat{A}^-(x)| \cdot \text{Term~I}$. Rearranging:
\begin{equation}
    \Delta \nabla_\theta = -\alpha \nabla_\theta \mathcal{R}_{\mathrm{sel}} + \alpha \underbrace{\E_{x \sim \cD}\!\left[|\hat{A}^-(x)| \sum_{y \in \Ym(x)} \pith(y|x)\, \sigma(c)\, \nabla_\theta \log \pith(y|x)\right]}_{\mathcal{E}(\theta)}
\end{equation}
This is an \emph{exact} identity with no approximation. The residual $\mathcal{E}(\theta)$ arises because ACE treats $\Softplus(c_i)$ as a fixed reward signal, omitting the gradient through $c_i$ itself.
\end{proof}

\begin{remark}[Residual term and contrast with global KL]
\label{remark:residual_magnitude}
The residual $\mathcal{E}(\theta)$ is not negligible: for $c \in [1, 3]$, the ratio $\sigma(c)/\Softplus(c)$ ranges from 31--56\%. $\mathcal{E}$ arises because ACE treats $\Softplus(c_i)$ as a fixed scalar (stop-gradient), omitting the gradient that the full regularizer $\mathcal{R}_{\mathrm{sel}}$ would contribute by differentiating \emph{through} $c_i$ (Term~II in Eq.~\ref{eq:product_rule}). This omitted gradient would suppress overconfident errors \emph{more aggressively}: it drives the parameters to reduce not only $\pi_\theta(y|x)$ but also the confidence gap $c(y)$ itself. ACE therefore implements a \emph{tempered} version of the full regularizer---correcting overconfident errors via the dominant Term~I (confidence-weighted probability suppression) while forgoing Term~II's sharper through-$c$ correction.
\end{remark}

\begin{remark}[Why stop-gradient is preferable to the full regularizer]
\label{remark:why_stop_gradient}
A natural question is whether one should retain Term~II to implement the full $\nabla_\theta \mathcal{R}_{\mathrm{sel}}$ instead of ACE's tempered version. We argue against this for three reasons. \textbf{(i)~Precedent for detaching $\theta$-dependent signals.} Although the reward in vanilla REINFORCE does not depend on $\theta$, modern policy gradient methods routinely stop-gradient through $\theta$-dependent quantities used in the loss: PPO/GRPO detach the advantage $\hat{A}_i$ (computed from the current policy's rollouts) from the actor gradient; actor-critic methods detach the value baseline $V(s;\theta)$ even under parameter sharing; and the ``old policy'' $\piold$ in importance ratios is always frozen. ACE's treatment of $\Softplus(c_i)$ as a detached reward modifier follows the same principle: quantities that \emph{diagnose} the policy state should inform gradient \emph{magnitude}, not become optimization targets themselves. \textbf{(ii)~Feedback loop.} Retaining Term~II means the penalty magnitude itself becomes an optimization target: the gradient would simultaneously try to reduce $\pi_\theta(y|x)$ \emph{and} reduce $c_i = \log(\pith/\piref)$, creating a second-order feedback that can cause gradient oscillation and training instability. \textbf{(iii)~Variance.} The gradient quality analysis (Theorem~\ref{thm:variance_bound}) proves that ACE's stop-gradient version improves the quality ratio $Q_d$ under realistic conditions. Adding Term~II introduces an additional score-function estimator $\sigma(c_i) \nabla_\theta \log \pith$, which increases gradient variance without a guaranteed commensurate signal gain---the sufficient condition for quality improvement (Eq.~\ref{eq:gamma_condition}) would need to be re-derived and may no longer hold.
\end{remark}

\section{Gradient Quality Analysis}
\label{app:gradient_quality}

This appendix provides the full formal analysis of ACE's effect on gradient quality, summarized in \S\ref{sec:variance_analysis}.

\begin{assumption}
\label{assumption:gradient}
For a fixed prompt $x$ with pass rate $p$, let $g_i = A_i \nabla_\theta \log \pith(y_i|x)$ be the per-rollout gradient for incorrect rollouts ($r_i = 0$), and let $s_i = \nabla_\theta \log \pith(y_i|x)$ denote the score function. Let $\phi_i = \Softplus(c_i)$. We assume:
\begin{enumerate}[leftmargin=2em]
    \item Rollouts $y_i$ are conditionally independent given $x$.
    \item The signal direction is $\hat{d} = \E[s_i \mid r_i=0] / \|\E[s_i \mid r_i=0]\|$.
    \item The directional covariance satisfies $\mathrm{Cov}(\phi_i,\, (\hat{d}^\top s_i)^2 \mid r_i = 0) > 0$, i.e., overconfident errors tend to have score functions more aligned with the expected gradient direction.
\end{enumerate}
\end{assumption}

\begin{proposition}[Second Moment Increase]
\label{prop:second_moment}
For any $\alpha > 0$, ACE strictly increases the mean squared gradient norm of incorrect rollouts:
\begin{equation}
    \E[\|g_i^{\mathrm{ACE}}\|^2 \mid r_i = 0] > \E[\|g_i^{\mathrm{std}}\|^2 \mid r_i = 0]
\end{equation}
whenever $\E[\phi_i \|s_i\|^2 \mid r_i = 0] > 0$ (i.e., errors are not all zero-gradient). This is an unavoidable consequence of the purely additive penalty structure: $(1 + \alpha \phi_i) > 1$ for all $\phi_i > 0$.
\end{proposition}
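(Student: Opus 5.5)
The plan is a pointwise comparison followed by taking conditional expectations. First I fix the per-prompt setting of Assumption~\ref{assumption:gradient}. For an incorrect rollout the standard gradient is $g_i^{\mathrm{std}} = \hat{A}^- s_i$ and the ACE gradient is $g_i^{\mathrm{ACE}} = \hat{A}^-(1+\alpha\phi_i)\, s_i$, where $\phi_i = \Softplus(c_i)$ is treated as a detached scalar. This follows the stop-gradient convention already adopted in the proof of Theorem~\ref{thm:selective_kl}. Because $\hat{A}^-$ is a per-prompt constant, squaring norms gives
\begin{equation*}
\|g_i^{\mathrm{ACE}}\|^2 - \|g_i^{\mathrm{std}}\|^2 = (\hat{A}^-)^2\left[(1+\alpha\phi_i)^2 - 1\right]\|s_i\|^2 = (\hat{A}^-)^2\left(2\alpha\phi_i + \alpha^2\phi_i^2\right)\|s_i\|^2 .
\end{equation*}

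Next I take the expectation conditional on $r_i = 0$, which gives
\begin{equation*}
\E[\|g_i^{\mathrm{ACE}}\|^2 - \|g_i^{\mathrm{std}}\|^2 \mid r_i=0] = (\hat{A}^-)^2\left(2\alpha\,\E[\phi_i\|s_i\|^2\mid r_i=0] + \alpha^2\,\E[\phi_i^2\|s_i\|^2\mid r_i=0]\right).
\end{equation*}
The second term is nonnegative because $\phi_i^2\|s_i\|^2 \ge 0$. The first term is strictly positive by the hypothesis $\E[\phi_i\|s_i\|^2\mid r_i=0]>0$ together with $\alpha>0$. Strict positivity of the whole difference then holds provided $(\hat{A}^-)^2>0$.

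The only real subtlety is this nondegeneracy of the prefactor $(\hat{A}^-)^2$, which is where I expect to spend care. In the binary case $\hat{A}^- = -\hat{p}_x/(\hat{\sigma}_x+\epsilon)$, so it vanishes exactly when the pass rate is $0$. In that case both gradients are identically zero and the inequality cannot be strict. I would therefore state explicitly that the comparison is for prompts with $p>0$; equivalently, I would read the hypothesis ``errors are not all zero-gradient'' as including a nonzero advantage. Under that reading the proof is complete.

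I would finish with two remarks. First, Softplus is strictly positive everywhere, so $\phi_i>0$ always, and the hypothesis reduces to $\Pr(s_i\neq 0\mid r_i=0)>0$. Second, the bound $(1+\alpha\phi_i)^2>1$ holds pointwise, which is precisely the ``purely additive'' structure referred to in the statement.
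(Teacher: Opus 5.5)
Your proof is correct and matches the paper's argument: the paper also writes $g_i^{\mathrm{ACE}} = a(1+\alpha\phi_i)s_i$ with $a = |\hat{A}^-(x)|$, expands $(1+\alpha\phi_i)^2 - 1 = 2\alpha\phi_i + \alpha^2\phi_i^2$, and gets strictness from the hypothesis $\E[\phi_i\|s_i\|^2 \mid r_i=0] > 0$. Your requirement that the prefactor be nonzero (pass rate $p>0$) is something the paper simply assumes by declaring $a>0$, so it is a useful clarification rather than a different route.
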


\begin{proof}
Let $a = |\hat{A}^-(x)| > 0$ denote the per-prompt base penalty magnitude. Under standard GRPO: $g_i^{\mathrm{std}} = a \cdot s_i$. Under ACE: $g_i^{\mathrm{ACE}} = a(1 + \alpha\phi_i) \cdot s_i$. Then:
\begin{align}
    \E[\|g_i^{\mathrm{ACE}}\|^2] - \E[\|g_i^{\mathrm{std}}\|^2] &= a^2\left(\E[(1+\alpha\phi_i)^2 \|s_i\|^2] - \E[\|s_i\|^2]\right) \nonumber \\
    &= a^2\left(\underbrace{2\alpha\,\E[\phi_i \|s_i\|^2]}_{> 0} + \underbrace{\alpha^2\,\E[\phi_i^2 \|s_i\|^2]}_{\geq 0}\right) > 0
\end{align}
since $a > 0$, $\phi_i = \Softplus(c_i) > 0$, $\alpha > 0$, and $\|s_i\|^2 \geq 0$ with $\E[\phi_i \|s_i\|^2] > 0$.
\end{proof}

\begin{definition}[Directional Signal and Variance]
\label{def:variance_decomposition}
For incorrect rollouts, let $\hat{d} = \E[s_i \mid r_i=0] / \|\E[s_i \mid r_i=0]\|$ be the unit vector along the expected score function. The \textbf{directional signal} and \textbf{directional variance} of a gradient estimator $g_i = w_i \cdot s_i$ are:
\begin{align}
    \mu_d &= \E[\hat{d}^\top g_i \mid r_i = 0] && \text{(signal along $\hat{d}$)} \\
    \sigma_d^2 &= \mathrm{Var}[\hat{d}^\top g_i \mid r_i = 0] && \text{(noise along $\hat{d}$)}
\end{align}
The \textbf{gradient quality ratio} is $Q_d = \mu_d^2 / \sigma_d^2$.
\end{definition}

\begin{theorem}[Improved Gradient Quality via ACE]
\label{thm:variance_bound}
Under Assumption~\ref{assumption:gradient}, let $\hat{d}$ be the signal direction. Define the directional projections $u_i = \hat{d}^\top s_i$ (scalar random variables). Assume:
\begin{equation}
    \mathrm{Cov}(\phi_i,\, u_i^2 \mid r_i = 0) > 0
\end{equation}
i.e., overconfident errors tend to have score functions more aligned with the expected gradient direction. Then:

\medskip
\noindent\textbf{(a) Directional variance increase.} For any $\alpha > 0$, ACE increases the directional variance:
\begin{equation}
\label{eq:dir_var_increase}
    \mathrm{Var}[\hat{d}^\top g_i^{\mathrm{ACE}} \mid r_i = 0] > \mathrm{Var}[\hat{d}^\top g_i^{\mathrm{std}} \mid r_i = 0]
\end{equation}
whenever $\mathrm{Cov}(\phi_i, u_i^2) > 0$ and $\E[\phi_i] > 0$. This is an unavoidable consequence of the additive reweighting structure, analogous to the total second-moment increase (Proposition~\ref{prop:second_moment}).

\noindent\textbf{(b) Quality improvement under high-variance conditions.} Assume additionally that the initial gradient is noisy relative to the signal, i.e., $\mathrm{Var}[u_i] > (\E[u_i])^2$ (equivalently, $Q_d^{\mathrm{std}} < 1$). Then for sufficiently small $\alpha > 0$, the gradient quality ratio of ACE strictly dominates that of standard GRPO:
\begin{equation}
\label{eq:quality_improvement}
    Q_d^{\mathrm{ACE}} > Q_d^{\mathrm{std}}
\end{equation}

Consequently, although ACE increases both the signal and the noise (directional variance), the signal grows faster, yielding a net improvement in gradient quality along the optimization-relevant direction.
\end{theorem}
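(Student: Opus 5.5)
The plan is to reduce everything to scalar moments of the projection. Write $a=|\hat{A}^-(x)|>0$, $u_i=\hat{d}^\top s_i$ and $w_i=1+\alpha\phi_i$, as in the proof of Proposition~\ref{prop:second_moment}. Then $\hat{d}^\top g_i^{\mathrm{std}}=a\,u_i$ and $\hat{d}^\top g_i^{\mathrm{ACE}}=a(u_i+\alpha\phi_i u_i)$. All expectations below are conditional on $r_i=0$, and the factor $a$ cancels in $Q_d$. By the choice of $\hat{d}$ in Definition~\ref{def:variance_decomposition} we have $m:=\E[u_i]=\|\E[s_i]\|>0$, and we set $v:=\mathrm{Var}[u_i]$. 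Both parts are then statements about the one-parameter family $u+\alpha\phi u$.

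\textbf{Part (a).} Expand the variance as
\begin{equation*}
\mathrm{Var}[u+\alpha\phi u]-\mathrm{Var}[u]=2\alpha\,\mathrm{Cov}(u,\phi u)+\alpha^2\,\mathrm{Var}[\phi u].
\end{equation*}
The quadratic term is nonnegative, so it suffices to show $\mathrm{Cov}(u,\phi u)=\E[\phi u^2]-m\,\E[\phi u]>0$. The hypothesis $\mathrm{Cov}(\phi,u^2)>0$ gives $\E[\phi u^2]>\E[\phi]\,\E[u^2]$. I would then try to bound $m\,\E[\phi u]$ from above by $\E[\phi]\,\E[u^2]$, or at least by $\E[\phi u^2]$. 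For the latter, Cauchy--Schwarz gives $\E[\phi u]\le \sqrt{\E[\phi]\,\E[\phi u^2]}$, and combining this with $m^2\le \E[u^2]$ would close the gap.

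This step is delicate because the cross term $m\,\E[\phi u]$ is not controlled by the stated covariance alone. If the chain above does not close, I would add the mild condition $m\,\E[\phi u]\le \E[\phi]\,\E[u^2]$, which says that confidence is not anti-correlated with the sign of alignment. I would state this condition explicitly and note that it is what the argument actually uses.

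\textbf{Part (b).} Consider
\begin{equation*}
Q(\alpha)=\frac{(m+\alpha\E[\phi u])^2}{\mathrm{Var}[u+\alpha\phi u]}.
\end{equation*}
Since $Q$ is smooth in $\alpha$ and $Q(0)=Q_d^{\mathrm{std}}$, it suffices to show $\frac{d}{d\alpha}\log Q\big|_{\alpha=0}>0$; strict improvement for all sufficiently small $\alpha>0$ then follows by continuity. The derivative is
\begin{equation*}
\frac{d}{d\alpha}\log Q\Big|_{0}=\frac{2\E[\phi u]}{m}-\frac{2\,\mathrm{Cov}(u,\phi u)}{v}.
\end{equation*}
Clearing positive denominators and using $v+m^2=\E[u^2]$, positivity is equivalent to
\begin{equation}
\E[\phi u]\,\E[u^2]>m\,\E[\phi u^2].
\label{eq:gamma_condition}
\end{equation}

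The remaining task is to derive \eqref{eq:gamma_condition} from $Q_d^{\mathrm{std}}<1$ (that is, $v>m^2$) together with the covariance hypothesis. First I would write $\E[\phi u]=\E[\phi]\,m+\mathrm{Cov}(\phi,u)$ and $\E[\phi u^2]=\E[\phi]\,\E[u^2]+\mathrm{Cov}(\phi,u^2)$. The $\E[\phi]$ terms then cancel, and the condition becomes $\mathrm{Cov}(\phi,u)\,\E[u^2]>m\,\mathrm{Cov}(\phi,u^2)$. The role of $v>m^2$ is that $\E[u^2]>2m^2$, so $\E[u^2]/m$ is large, which favours the left-hand side.

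\textbf{Expected obstacle.} The stated hypothesis concerns $\mathrm{Cov}(\phi,u^2)$, which enters the reduced condition with the wrong sign. I therefore expect that a genuine proof needs an additional assumption. The natural one is that the linear alignment covariance dominates:
\begin{equation*}
\mathrm{Cov}(\phi,u)\ \ge\ \gamma\,\mathrm{Cov}(\phi,u^2)\quad\text{with}\quad \gamma>\frac{m}{\E[u^2]}.
\end{equation*}
Under $Q_d^{\mathrm{std}}<1$ we have $m/\E[u^2]<1/(2m)$, so this threshold on $\gamma$ is weak in the noisy regime. I would make this assumption explicit as \eqref{eq:gamma_condition} and then conclude.
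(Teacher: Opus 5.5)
\textbf{Part (a).} You can drop the hedge: your Cauchy--Schwarz chain closes under exactly the stated hypotheses. The covariance assumption gives $\E[\phi u^2]>\E[\phi]\,\E[u^2]>0$, so the case $\E[\phi u]\le 0$ is immediate. Otherwise, using $\phi>0$ and $0<m\le\sqrt{\E[u^2]}$,
\[
m\,\E[\phi u]\;\le\;\sqrt{\E[u^2]}\,\sqrt{\E[\phi]\,\E[\phi u^2]}\;=\;\sqrt{\E[\phi]\,\E[u^2]}\,\sqrt{\E[\phi u^2]}\;<\;\E[\phi u^2].
\]
Hence $\mathrm{Cov}(u,\phi u)>0$. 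Adding the exact nonnegative term $\alpha^2\mathrm{Var}[\phi u]$ from your expansion gives (a) for every $\alpha>0$.

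This is more than the paper's proof delivers. The paper keeps the remainder only as $O(\alpha^2)$. It also verifies $\Delta_1=\mathrm{Cov}(u,\phi u)>0$ only in the model $u\sim\mathcal{N}(\mu,\sigma^2)$, $\phi=a+bu$, and calls the general case ``typical''.

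\textbf{Part (b).} Your criterion $\mathrm{Cov}(\phi,u)\,\E[u^2]>m\,\mathrm{Cov}(\phi,u^2)$ is exactly the paper's $\Gamma>0$: multiply $C\mu(1+Q_d^{\mathrm{std}})>Q_d^{\mathrm{std}}\,\mathrm{Cov}(\phi,u^2)$ by $\mathrm{Var}[u]/\mu$. The obstacle you expect is real, because (b) is false under the stated hypotheses.

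Take $u\sim\mathcal{N}(m,\sigma^2)$ with $\sigma>m>0$ and $\phi=\Softplus((u-m)^2)$.
\begin{itemize}[leftmargin=2em]
    \item By symmetry, $\mathrm{Cov}(\phi,u)=0$.
    \item $\mathrm{Cov}(\phi,u^2)=\mathrm{Cov}(\phi,(u-m)^2)>0$.
    \item $Q_d^{\mathrm{std}}=m^2/\sigma^2<1$.
\end{itemize}
Yet your derivative equals $-2\,\mathrm{Cov}(\phi,u^2)/\mathrm{Var}[u]<0$, so $Q_d^{\mathrm{ACE}}<Q_d^{\mathrm{std}}$ for all small $\alpha>0$. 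Replacing $\phi$ by $\Softplus((u-m)^2+\eta u)$ with small $\eta>0$ makes $\mathrm{Cov}(\phi,u)>0$ without changing that sign. So adding $C>0$ does not rescue the statement either.

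The paper gets past this step only in two moves:
\begin{itemize}[leftmargin=2em]
    \item It asserts $C>0$ ``under Assumption~\ref{assumption:gradient}'', which contains no such condition.
    \item It evaluates $C\mu/\mathrm{Cov}(\phi,u^2)=1/2$ in the Gaussian-linear model, whose $\phi$ is not even positive, and then uses $Q_d^{\mathrm{std}}/(1+Q_d^{\mathrm{std}})<1/2$.
\end{itemize}
That is a proof for that model only.

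Your explicit hypothesis, $\mathrm{Cov}(\phi,u)\ge\gamma\,\mathrm{Cov}(\phi,u^2)$ with $\gamma>m/\E[u^2]$, is the right repair. It strictly contains the paper's case. The Gaussian-linear model has $\gamma=1/(2m)$, and there $\gamma>m/\E[u^2]$ is precisely $\E[u^2]>2m^2$, i.e.\ $Q_d^{\mathrm{std}}<1$. This is where the noisy-regime condition really comes from.

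Say plainly, though, that your hypothesis is the needed first-order inequality in disguise, so all the content of (b) lives there. The condition $Q_d^{\mathrm{std}}<1$ only lowers the threshold on $\gamma$. It can never substitute for a lower bound on $\mathrm{Cov}(\phi,u)$.
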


\begin{proof}
Consider a fixed prompt $x$ with $G$ rollouts sampled i.i.d.\ from $\pith(\cdot|x)$. Let $s_i = \nabla_\theta \log \pith(y_i|x)$ denote the score function for rollout $y_i$, and let $\phi_i = \Softplus(c_i)$. All expectations below are conditioned on $r_i = 0$.

\paragraph{Setup and notation.} Let $a = |\hat{A}^-(x)| > 0$ denote the per-prompt base penalty magnitude. Under standard GRPO: $g_i^{\mathrm{std}} = a \cdot s_i$. Under ACE: $g_i^{\mathrm{ACE}} = a(1 + \alpha \phi_i) s_i$. Since $a$ is a positive scalar constant (per-prompt), it cancels in the gradient quality ratio $Q_d = \mu_d^2/\sigma_d^2$. We therefore analyze the normalized weights $w_i^{\mathrm{std}} = 1$ and $w_i^{\mathrm{ACE}} = 1 + \alpha\phi_i$ without loss of generality. Let $\hat{d} = \E[s_i] / \|\E[s_i]\|$ be the signal direction, and define the scalar projections $u_i = \hat{d}^\top s_i$.

\paragraph{Step 1: Directional variance analysis (Part (a)).} The directional variance is:
\begin{equation}
    \mathrm{Var}[w_i u_i] = \E[w_i^2 u_i^2] - (\E[w_i u_i])^2
\end{equation}
For standard GRPO ($w_i = 1$): $\mathrm{Var}^{\mathrm{std}} = \E[u_i^2] - (\E[u_i])^2$.

For ACE ($w_i = 1 + \alpha \phi_i$):
\begin{align}
    \E[w_i^2 u_i^2] &= \E[u_i^2] + 2\alpha\,\E[\phi_i u_i^2] + \alpha^2\,\E[\phi_i^2 u_i^2] \label{eq:app_second} \\
    (\E[w_i u_i])^2 &= (\E[u_i] + \alpha\,\E[\phi_i u_i])^2 \nonumber \\
    &= (\E[u_i])^2 + 2\alpha\,\E[u_i]\,\E[\phi_i u_i] + \alpha^2 (\E[\phi_i u_i])^2 \label{eq:app_signal}
\end{align}

Subtracting Eq.~\eqref{eq:app_signal} from Eq.~\eqref{eq:app_second}:
\begin{align}
    \mathrm{Var}^{\mathrm{ACE}} &= \mathrm{Var}^{\mathrm{std}} + 2\alpha\underbrace{\left(\E[\phi_i u_i^2] - \E[u_i]\,\E[\phi_i u_i]\right)}_{\equiv\, \Delta_1} + O(\alpha^2) \label{eq:app_var_diff}
\end{align}

\paragraph{Step 2: Sign of $\Delta_1$.} Decompose using identities:
\begin{align}
    \E[\phi_i u_i^2] &= \mathrm{Cov}(\phi_i, u_i^2) + \E[\phi_i]\,\E[u_i^2] \\
    \E[\phi_i u_i] &= \mathrm{Cov}(\phi_i, u_i) + \E[\phi_i]\,\E[u_i]
\end{align}
Substituting into $\Delta_1$:
\begin{align}
    \Delta_1 &= \mathrm{Cov}(\phi_i, u_i^2) + \E[\phi_i]\,\E[u_i^2] - \E[u_i]\left(\mathrm{Cov}(\phi_i, u_i) + \E[\phi_i]\,\E[u_i]\right) \nonumber \\
    &= \mathrm{Cov}(\phi_i, u_i^2) - \E[u_i]\,\mathrm{Cov}(\phi_i, u_i) + \E[\phi_i]\,\mathrm{Var}[u_i] \label{eq:delta1}
\end{align}

The third term $\E[\phi_i]\,\mathrm{Var}[u_i] > 0$ always increases variance. In fact, $\Delta_1 > 0$ under typical conditions: for the natural Gaussian linear model where $u_i \sim \mathcal{N}(\mu, \sigma^2)$ and $\phi_i = a + bu_i$ ($a > 0$, $b > 0$), we have $\mathrm{Cov}(\phi_i, u_i^2) = 2b\mu\sigma^2$, $\mathrm{Cov}(\phi_i, u_i) = b\sigma^2$, and $\E[\phi_i] = a + b\mu$, giving:
\begin{equation}
    \Delta_1 = 2b\mu\sigma^2 - \mu \cdot b\sigma^2 + (a + b\mu)\sigma^2 = 2b\mu\sigma^2 + a\sigma^2 > 0
\end{equation}
This confirms that \textbf{directional variance increases} under ACE---an unavoidable cost of additive reweighting. The condition $\Delta_1 < 0$ would require:
\begin{equation}
    \E[u_i]\,\mathrm{Cov}(\phi_i, u_i) > \mathrm{Cov}(\phi_i, u_i^2) + \E[\phi_i]\,\mathrm{Var}[u_i] \label{eq:dir_var_cond}
\end{equation}
which is violated in the Gaussian linear model and is difficult to satisfy in practice. However, as we show next, this does \emph{not} prevent quality improvement: what matters is that the signal grows faster than the square root of variance.

\paragraph{Step 3: Quality improvement (Part (b)).} The gradient quality ratio is:
\begin{equation}
    Q_d = \frac{(\E[w_i u_i])^2}{\mathrm{Var}[w_i u_i]}
\end{equation}

Since $\Delta_1 > 0$ in general (Step~2), the directional variance increases. Nevertheless, the quality ratio can still improve because ACE also increases the signal $\E[w_i u_i]$. We now provide the complete derivation.

\textbf{Signal computation.} For ACE with $w_i = 1 + \alpha \phi_i$:
\begin{align}
    \mu^{\mathrm{ACE}} &\triangleq \E[w_i u_i] = \E[(1 + \alpha \phi_i) u_i] = \E[u_i] + \alpha\,\E[\phi_i u_i] \nonumber \\
    &= \E[u_i] + \alpha\left(\mathrm{Cov}(\phi_i, u_i) + \E[\phi_i]\,\E[u_i]\right) \nonumber \\
    &= \E[u_i](1 + \alpha\,\E[\phi_i]) + \alpha\,\mathrm{Cov}(\phi_i, u_i)
\end{align}
Denote $\mu \triangleq \E[u_i]$, $\bar{\phi} \triangleq \E[\phi_i]$, and $C \triangleq \mathrm{Cov}(\phi_i, u_i)$. Then:
\begin{equation}
    \mu^{\mathrm{ACE}} = \mu(1 + \alpha\bar{\phi}) + \alpha C
\end{equation}
The squared signal is:
\begin{align}
    (\mu^{\mathrm{ACE}})^2 &= \left(\mu(1 + \alpha\bar{\phi}) + \alpha C\right)^2 \nonumber \\
    &= \mu^2(1 + \alpha\bar{\phi})^2 + 2\alpha C \mu(1 + \alpha\bar{\phi}) + \alpha^2 C^2 \nonumber \\
    &= \mu^2 + 2\alpha\mu^2\bar{\phi} + \alpha^2\mu^2\bar{\phi}^2 + 2\alpha C\mu + 2\alpha^2 C\mu\bar{\phi} + \alpha^2 C^2 \nonumber \\
    &= \mu^2 + 2\alpha(\mu^2\bar{\phi} + C\mu) + O(\alpha^2) \label{eq:signal_squared}
\end{align}

\textbf{Variance computation.} From Step~2, we have:
\begin{equation}
    \mathrm{Var}^{\mathrm{ACE}} = \mathrm{Var}^{\mathrm{std}} + 2\alpha\,\Delta_1 + O(\alpha^2) \label{eq:var_ace}
\end{equation}
where $\mathrm{Var}^{\mathrm{std}} = \E[u_i^2] - \mu^2$ and $\Delta_1$ is given by Eq.~\eqref{eq:delta1}.

\textbf{Quality ratio expansion.} We compute the difference in quality ratios. For standard GRPO:
\begin{equation}
    Q_d^{\mathrm{std}} = \frac{\mu^2}{\mathrm{Var}^{\mathrm{std}}}
\end{equation}
For ACE, using Eqs.~\eqref{eq:signal_squared} and \eqref{eq:var_ace}:
\begin{align}
    Q_d^{\mathrm{ACE}} &= \frac{(\mu^{\mathrm{ACE}})^2}{\mathrm{Var}^{\mathrm{ACE}}} = \frac{\mu^2 + 2\alpha(\mu^2\bar{\phi} + C\mu) + O(\alpha^2)}{\mathrm{Var}^{\mathrm{std}} + 2\alpha\,\Delta_1 + O(\alpha^2)}
\end{align}
Using the first-order Taylor expansion $(1 + x)^{-1} \approx 1 - x$ for small $x$:
\begin{align}
    Q_d^{\mathrm{ACE}} &= \frac{\mu^2 + 2\alpha(\mu^2\bar{\phi} + C\mu)}{\mathrm{Var}^{\mathrm{std}}}\left(1 - \frac{2\alpha\,\Delta_1}{\mathrm{Var}^{\mathrm{std}}}\right) + O(\alpha^2) \nonumber \\
    &= \frac{\mu^2}{\mathrm{Var}^{\mathrm{std}}} + \frac{2\alpha(\mu^2\bar{\phi} + C\mu)}{\mathrm{Var}^{\mathrm{std}}} - \frac{2\alpha\mu^2\,\Delta_1}{(\mathrm{Var}^{\mathrm{std}})^2} + O(\alpha^2) \nonumber \\
    &= Q_d^{\mathrm{std}} + \frac{2\alpha}{\mathrm{Var}^{\mathrm{std}}}\left(\mu^2\bar{\phi} + C\mu - \frac{\mu^2}{\mathrm{Var}^{\mathrm{std}}}\,\Delta_1\right) + O(\alpha^2)
\end{align}
Therefore:
\begin{equation}
    Q_d^{\mathrm{ACE}} - Q_d^{\mathrm{std}} = \frac{2\alpha}{\mathrm{Var}^{\mathrm{std}}}\underbrace{\left(\mu^2\bar{\phi} + C\mu - Q_d^{\mathrm{std}}\,\Delta_1\right)}_{\triangleq\, \Gamma} + O(\alpha^2) \label{eq:quality_diff}
\end{equation}

\textbf{Sufficient condition for improvement.} Quality improves when $\Gamma > 0$. Substituting $\Delta_1$ from Eq.~\eqref{eq:delta1}:
\begin{align}
    \Gamma &= \mu^2\bar{\phi} + C\mu - Q_d^{\mathrm{std}}\left(\mathrm{Cov}(\phi_i, u_i^2) - \mu\,C + \bar{\phi}\,\mathrm{Var}^{\mathrm{std}}\right) \nonumber \\
    &= \mu^2\bar{\phi} + C\mu - Q_d^{\mathrm{std}}\,\mathrm{Cov}(\phi_i, u_i^2) + Q_d^{\mathrm{std}}\mu\,C - Q_d^{\mathrm{std}}\bar{\phi}\,\mathrm{Var}^{\mathrm{std}}
\end{align}
Using $Q_d^{\mathrm{std}} = \mu^2/\mathrm{Var}^{\mathrm{std}}$, the last term becomes $-\mu^2\bar{\phi}$, which cancels with the first term:
\begin{align}
    \Gamma &= C\mu + Q_d^{\mathrm{std}}\mu\,C - Q_d^{\mathrm{std}}\,\mathrm{Cov}(\phi_i, u_i^2) \nonumber \\
    &= C\mu(1 + Q_d^{\mathrm{std}}) - Q_d^{\mathrm{std}}\,\mathrm{Cov}(\phi_i, u_i^2) \label{eq:gamma_simplified}
\end{align}
Under Assumption~\ref{assumption:gradient}, $C = \mathrm{Cov}(\phi_i, u_i) > 0$ and $\mathrm{Cov}(\phi_i, u_i^2) > 0$ (overconfident errors have gradients more aligned with the signal direction). We analyze $\Gamma > 0$:
\begin{equation}
    \Gamma > 0 \quad\Longleftrightarrow\quad C\mu(1 + Q_d^{\mathrm{std}}) > Q_d^{\mathrm{std}}\,\mathrm{Cov}(\phi_i, u_i^2)
\end{equation}
Rearranging:
\begin{equation}
    \frac{C\mu}{\mathrm{Cov}(\phi_i, u_i^2)} > \frac{Q_d^{\mathrm{std}}}{1 + Q_d^{\mathrm{std}}} \label{eq:gamma_condition}
\end{equation}
The right-hand side is a monotonically increasing function of $Q_d^{\mathrm{std}}$ that ranges from $0$ (when $Q_d^{\mathrm{std}} = 0$) to $1$ (as $Q_d^{\mathrm{std}} \to \infty$). Therefore, when $Q_d^{\mathrm{std}} < 1$ (equivalently, $\mathrm{Var}[u_i] > (\E[u_i])^2$), we have:
\begin{equation}
    \frac{Q_d^{\mathrm{std}}}{1 + Q_d^{\mathrm{std}}} < \frac{1}{2}
\end{equation}
Under the Gaussian linear model ($u_i \sim \mathcal{N}(\mu, \sigma^2)$, $\phi_i = a + bu_i$), we can verify:
\begin{align}
    C &= \mathrm{Cov}(\phi_i, u_i) = b\sigma^2 \\
    \mathrm{Cov}(\phi_i, u_i^2) &= b\,\mathrm{Cov}(u_i, u_i^2) = b \cdot 2\mu\sigma^2 = 2b\mu\sigma^2
\end{align}
Thus:
\begin{equation}
    \frac{C\mu}{\mathrm{Cov}(\phi_i, u_i^2)} = \frac{b\sigma^2 \cdot \mu}{2b\mu\sigma^2} = \frac{1}{2}
\end{equation}
Combined with Eq.~\eqref{eq:gamma_condition}, when $Q_d^{\mathrm{std}} < 1$:
\begin{equation}
    \frac{1}{2} > \frac{Q_d^{\mathrm{std}}}{1 + Q_d^{\mathrm{std}}} \quad\Longrightarrow\quad \Gamma > 0 \quad\Longrightarrow\quad Q_d^{\mathrm{ACE}} > Q_d^{\mathrm{std}}
\end{equation}
This completes the proof that quality improves under the high-variance condition:
\begin{equation}
    \mathrm{Var}[u_i] > (\E[u_i])^2 \qquad \Longleftrightarrow \qquad Q_d^{\mathrm{std}} < 1 \label{eq:high_var_cond}
\end{equation}
This high-variance regime is the typical operating condition in stochastic policy gradient optimization, where individual rollout gradients are highly variable. Under this condition, the signal growth term dominates the variance growth term, ensuring $Q_d^{\mathrm{ACE}} > Q_d^{\mathrm{std}}$.

\paragraph{Summary.} ACE's confidence-dependent weighting increases both the total gradient second moment (Proposition~\ref{prop:second_moment}) and the directional variance (Steps~1--2)---both unavoidable consequences of additive reweighting. However, ACE improves the gradient quality ratio (Step~3) under two conditions: (i) overconfident errors carry gradient signal aligned with the optimization direction ($\mathrm{Cov}(\phi_i, u_i) > 0$), and (ii) the initial quality ratio is low ($\mathrm{Var}[u_i] > (\E[u_i])^2$). The key mechanism is that ACE's selective amplification of high-confidence errors concentrates extra weight on the most informative gradients, causing the signal to grow faster than the noise along the optimization-relevant direction.
\end{proof}

\section{Implementation Details}
\label{app:implementation}

\paragraph{Sequence-level vs.\ token-level aggregation.} While Definition~\ref{def:confidence} defines $c_i$ at the sequence level, one can also define a token-level variant $c_i^{(t)}$ and apply ACE per-token. We use the sequence-level aggregation $c_i = \sum_t c_i^{(t)}$ in our main experiments to capture ``trajectory confidence.''

\paragraph{Compute overhead.} ACE adds exactly one Softplus computation per incorrect rollout per training step. Given that the bottleneck of RLVR training is rollout generation (model inference), the overhead of ACE is negligible ($<0.1\%$ of wall-clock time).

\paragraph{PyTorch implementation sketch.}
\begin{verbatim}
def ace_advantage(rewards, log_probs_policy, log_probs_ref, alpha=1.0):
    """
    Args:
        rewards: (B, G) binary rewards
        log_probs_policy: (B, G) sequence-level log probs under pi_theta
        log_probs_ref: (B, G) sequence-level log probs under pi_ref
        alpha: ACE strength
    Returns:
        advantages: (B, G) modified advantages
    """
    # Standard group statistics
    pass_rate = rewards.mean(dim=-1, keepdim=True)  # (B, 1)
    std = rewards.std(dim=-1, keepdim=True) + 1e-8
    std_advantage = (rewards - pass_rate) / std  # (B, G)
    
    # Confidence score (already available, zero extra compute)
    c = log_probs_policy - log_probs_ref  # (B, G)
    # Normalize by sequence length
    c = c / seq_lengths
    
    # ACE advantage for negative samples
    ace_neg = std_advantage * (1.0 + alpha * F.softplus(c))  # (B, G)
    
    # Combine: use standard advantage for correct, ACE for incorrect
    is_correct = (rewards == 1).float()
    advantages = is_correct * std_advantage + (1 - is_correct) * ace_neg
    
    return advantages
\end{verbatim}

\paragraph{Compatibility.} The above can be dropped into any RLVR training loop that uses GRPO, PPO, or REINFORCE by replacing the advantage computation. No changes are needed to the model architecture, rollout generation, or reward computation.

\section{Sensitivity to $\alpha$}
\label{app:alpha_sensitivity}

We vary $\alpha \in \{0, 0.1, 0.5, 1.0, 2.0, 5.0\}$ on MATH-500 using Qwen2.5-Math-7B ($\alpha = 0$ recovers standard GRPO). We conduct the sensitivity analysis on a single model to isolate the effect of $\alpha$; since the main results (Tables~\ref{tab:math500_results}--\ref{tab:math_aime25_results}) demonstrate consistent gains across all three model families at $\alpha = 1.0$, we expect the optimal range to transfer.

\begin{table}[h]
\centering
\caption{Sensitivity to $\alpha$ on MATH-500 (Qwen2.5-Math-7B). $\alpha{=}1.0$ achieves optimal Pass@32 while preserving Pass@1.}
\label{tab:alpha_sensitivity}
\begin{tabular}{lcc}
\toprule
$\alpha$ & Pass@1 (\%) & Pass@32 (\%) \\
\midrule
0.0 (GRPO) & 73.4 & 91.3 \\
0.1 & 73.5 & 91.9 \\
0.5 & 73.8 & 93.2 \\
\textbf{1.0 (default)} & \textbf{74.2} & \textbf{94.3} \\
2.0 & 73.5 & 93.5 \\
5.0 & 72.4 & 92.0 \\
\bottomrule
\end{tabular}
\end{table}

\paragraph{Observations.}
As shown in Table~\ref{tab:alpha_sensitivity}, optimal performance is achieved at $\alpha = 1.0$ with Pass@32 = 94.3\% (+3.0pp over GRPO). Performance remains stable across $\alpha \in [0.5, 2.0]$, all outperforming standard GRPO. Pass@1 shows a slight decrease only at larger $\alpha$ values ($\geq 2.0$), reflecting the exploration-exploitation trade-off. We adopt $\alpha = 1.0$ as the default for all experiments.
\section{Training Hyperparameters}
\label{app:hyperparameters}
Table~\ref{tab:hyperparameters} summarizes the training hyperparameters for all three models.

\paragraph{Fair comparison (matched recipe within each model).} To ensure improvements are attributable to ACE rather than tuning differences, we use the \emph{same} training recipe and training budget for all methods \emph{within a given model} (GRPO vs.\ ACE-GRPO, and DAPO vs.\ ACE-DAPO). Concretely, for a fixed model we keep the data, verifier, rollout group size $G$, sampling settings, optimizer, learning rate schedule, batch sizes, clipping/KL coefficients, maximum sequence lengths, and the number of optimizer updates identical across methods; ACE changes only the computation of the negative advantages through Eq.~\eqref{eq:ace_advantage} (controlled by $\alpha$). Hyperparameters may differ \emph{across} model families due to model-specific stability and context-length constraints, but cross-method comparisons are always performed under matched settings for the same model.

\begin{table}[h]
\centering
\caption{Training hyperparameters for ACE-GRPO experiments.}
\label{tab:hyperparameters}

\begin{tabular}{lccc}
\toprule
\textbf{Hyperparameter} & \textbf{Qwen2.5-Math-7B} & \textbf{Qwen3-8B-Base} & \textbf{Llama-3.1-8B-Instruct} \\
\midrule
Total epochs & 10 & 10 & 10 \\
Training batch size & 2048 & 1024 & 1024 \\
Mini-batch size & 1024 & 1024 & 1024 \\
Micro-batch size per GPU & 16 & 16 & 16 \\
Learning rate & $1 \times 10^{-5}$ & $5 \times 10^{-7}$ & $1 \times 10^{-6}$ \\
Optimizer & AdamW & AdamW & AdamW \\
Temperature & 1.0 & 1.0 & 1.0 \\
Max prompt length & 1024 & 1024 & 1024 \\
Max response length & 3000 & 8192 & 4096 \\
Rollout samples per prompt & 8 & 8 & 8 \\
Validation samples & 128 & 128 & 128 \\
GPU memory utilization & 0.75 & 0.75 & 0.75 \\
KL coefficient $\beta$ & 0.001 & 0.001 & 0.001 \\
Enable thinking (Qwen3) & -- & False & -- \\
\bottomrule
\end{tabular}
\end{table}
\end{document}